\documentclass{article}

% if you need to pass options to natbib, use, e.g.:
%     \PassOptionsToPackage{numbers, compress}{natbib}
% before loading neurips_2021

% ready for submission
%\usepackage{neurips_2021}

% to compile a preprint version, e.g., for submission to arXiv, add add the
% [preprint] option:
%\usepackage[preprint]{neurips_2021}

% to compile a camera-ready version, add the [final] option, e.g.:
\usepackage[final]{neurips_2021}

% to avoid loading the natbib package, add option nonatbib:
%    \usepackage[nonatbib]{neurips_2021}

\usepackage[utf8]{inputenc} % allow utf-8 input
\usepackage[T1]{fontenc}    % use 8-bit T1 fonts

\usepackage{microtype}
\usepackage{graphicx}
\usepackage{subfigure}

\usepackage{hyperref}

% Attempt to make hyperref and algorithmic work together better:

\usepackage{algorithm}
\usepackage{algpseudocode}
\usepackage{times}
\usepackage{latexsym}
\usepackage{booktabs}
\usepackage{graphicx}
\usepackage{amsmath}
\usepackage{multirow}
\usepackage{wrapfig}
\usepackage{hhline}
\usepackage{tabularx}
\usepackage{xcolor}
\usepackage{makecell}
\usepackage{comment}

\include{Definitions}

%TODO(tomerl): change "scaling coefficients" 
\begin{document}

\title{Bridging the Gap Between Practice and PAC-Bayes Theory in Few-Shot Meta-Learning}
%Bridging The Gap Between Practical Meta-learning and PAC-Bayes Theory in the Few-shot Setting
% Meta-learning in the few shot setting: bridging the gap between practice and PAC-Bayes theory
% Meta-learning in the few shot setting: bridging the gap between practice and PAC-Bayes theory
% PAC-Bayesian bounds for Practical Meta-learning in the Few-shot Setting
% Practical PAC-Bayesian Meta-learning in Few-shot
%Bridging the gap between PAC-Bayesian bounds and Practical Meta-learning in the Few-shot Setting

% PAC-Bayesian bounds for practical Meta-learning in the Few-shot Setting

\author{
  Nan Ding \\
  Google Research \\
  %Venice, CA 90291 \\
  \texttt{dingnan@google.com}
  \And
  Xi Chen\\
  Google Research \\
  %Cambridge, MA 02142 \\
  \texttt{chillxichen@google.com}
  \And
  Tomer Levinboim\\
  Google Research \\
  %Venice, CA 90291 \\
  \texttt{tomerl@google.com}
  \And
  Sebastian Goodman\\
  Google Research \\
  %Venice, CA 90291 \\
  \texttt{seabass@google.com}
  \And
  Radu Soricut \\
  Google Research \\
  %Venice, CA 90291 \\
  \texttt{rsoricut@google.com}
}

\maketitle

\begin{abstract}
%Meta-learning is a machine learning approach that suggests to pretrain models on existing observed tasks and then adapt them to new (unseen) target tasks, possibly with limited data.
Despite recent advances in its theoretical understanding, there still remains a significant gap in the ability of existing PAC-Bayesian theories on meta-learning to explain performance improvements in the few-shot learning setting, where the number of training examples in the target tasks is severely limited. 
This gap originates from an assumption in the existing theories which supposes that the number of training examples in the observed tasks and the number of training examples in the target tasks follow the same distribution, an assumption that rarely holds in practice.
By relaxing this assumption, we develop two PAC-Bayesian bounds tailored for the few-shot learning setting and show that two existing meta-learning algorithms (MAML and Reptile) can be derived from our bounds, thereby bridging the gap between practice and PAC-Bayesian theories. 
Furthermore, we derive a new computationally-efficient PACMAML algorithm, and show it outperforms existing meta-learning algorithms on several few-shot benchmark datasets.
\end{abstract}

\section{Introduction}

Recent advances in machine learning and neural networks have resulted in effective but parameter-bloated, data-hungry models. When the training data for a target task of interest is insufficient, such overparameterized models may easily overfit to the training data and exhibit poor generalization abilities. To address this problem, several research efforts  have focused on designing a learning strategy that can leverage the training data of other tasks for the sake of improving the performance of some specific target task(s). Specifically, in the meta-learning (also called learning-to-learn or lifelong-learning) setting~\citep{baxter1998theoretical,Ravi2017OptimizationAA}, a meta-learner first extracts knowledge from a set of observed (meta-training) tasks and subsequently, this knowledge enables a base-learner to better adapt to the new, possibly data-limited target (meta-testing) task. 
%The meta-learning framework encapsulates two related and well-known learning paradigms: pretraining-finetuning~\citep{russakovsky2015imagenet,devlin2019bert} and model agnostic meta-learning (MAML)~\citep{finn2017model}, both of which have made significant practical impact on computer vision~\citep{russakovsky2015imagenet}, language understanding~\citep{devlin2019bert}, reinforcement learning~\citep{finn2017model} and many other research fields.
The meta-learning framework has been successfully applied and made significant practical impact on computer vision~\citep{russakovsky2015imagenet}, language understanding~\citep{devlin2019bert}, reinforcement learning~\citep{finn2017model} and many other research fields.

%Despite its great empirical success, the theoretical understanding of meta-learning is still lacking. In particular, how does meta-learning utilizes its knowledge coming from the observed task data and generalizes to the unseen target task that is data-limited. 
In parallel to its impressive empirical success, a series of theoretical works~\citep{tripuraneni2020theory,pentina14,amit18a,rothfuss2020pacoh}  study how meta-learning utilizes the knowledge obtained from the observed task data and how it generalizes to the unseen target task. 
Among the generalization bounds, PAC-Bayes bounds~\citep{mcallester1999some,germain2009pac} are considered especially tight and have already been proposed for meta-learning~\citep{pentina14,amit18a,rothfuss2020pacoh}. However, there still remains a gap between these existing PAC-Bayesian bounds and their practical application (especially in the few-shot setting), which originates from the assumption that 
%the number of training examples $\tilde{m}$ in the observed tasks and the number of training examples $m$ in the target task follow the same distribution (i.e. $\tilde{m} \simeq m$).
the observed task environment $\tilde{T}$ and the target task environment $T$ are the same. In the PAC-Bayesian meta-learning setting, a task environment $T$ is a distribution from which $(D, m)$ is drawn from, where $D$ is the data distribution and $m$ is the number of training examples for the task. Although there is research work studying the case of general environment change (e.g.~\citep{pentina2015lifelong}) or data domain change (e.g.~\cite{germain2016new}), to the best of our knowledge, there is little work focusing on the case where only the number of training examples $\tilde{m}$ in the observed tasks and $m$ in the target task do not follow the same distribution.
%This assumption does not fit well in practice, where 
In practice, such mismatch commonly happens, because there is usually significantly more data in observed tasks than the target tasks, especially in the few-shot case.
Without explicitly addressing this mismatch, the scope of the current theory is severely limited, and it prohibits a useful analysis on practical meta-learning algorithms such as MAML~\cite{finn2017model}. For example, when the number of training examples $m$ in the target task is small, the existing bounds yield a large generalization gap which grows with $O(1/m)$. 
% TODO(tomerl): keep the below abstract, but what is the difference between thm3 and thm4? (Done)
% Maybe mention the task env' around here
In this paper, we bring the theory closer to practice by studying the setting where there are significantly more training examples in the observed task than in the target task (i.e., $\tilde{m} \gg m$). 
In Section \ref{sec:meta_training_relaxed_1}, we study two practical meta-training strategies and provide their PAC-Bayesian bounds in Theorem~\ref{thm:pac-meta2} and Theorem~\ref{thm:pac-meta3}. Both results are able to bring down the scaling coefficient of the bound from $O(1/m)$ to $O(1/\tilde{m})$. However, Theorem~\ref{thm:pac-meta2} introduces a penalty term in the bound that captures the discrepancy between the observed and target task environment.
Motivated by MAML~\cite{finn2017model}, we show with Theorem~\ref{thm:pac-meta3} that we can eliminate the penalty term by utilizing a subsampling strategy, yielding a much tighter bound. 

This theoretical work also bridges the gap from practice to theory, as we further show that the maximum-a-posteriori (MAP) estimates of our bounds (in which the base-learner and the hyper-posterior are both approximated by Dirac-measures) yield various popular meta-learning algorithms, including multi-task pretraining~\citep{russakovsky2015imagenet}, Reptile~\citep{nichol1803first} and MAML~\citep{finn2017model}. 
In that sense, our PAC-Bayesian theories provide a different perspective for understanding and justifying these commonly used algorithms (Section \ref{sec:relation}).

Lastly, in Section \ref{sec:instantiation}, we propose PACMAML, a novel PAC-Bayesian meta-learning algorithm based on Theorem \ref{thm:pac-meta3}. As opposed to MAML, our algorithm does not have higher-order derivatives in the gradient, and therefore represents a significant improvement in computational efficiency.
In Section \ref{sec:experiments}, we conduct numerical experiments that empirically support the correctness of our theorems, and report the effectiveness of the new PACMAML algorithm, which obtains superior results on several few-shot benchmark datasets.

\section{Preliminaries}
We begin by reviewing the background and settings of the existing PAC-Bayesian bounds for meta-learning. Our notation mainly follows that of \citep{rothfuss2020pacoh}, which is itself adapted from \citep{pentina14,amit18a,baxter1998theoretical}.
% Note(tomerl): It might be nice to set reader's expectations here about the meaning of these bounds.

\paragraph{PAC-Bayesian for Supervised Learning}
In supervised learning, a learning task is characterized by a data distribution $D$ over a data domain $Z$ where every example $z = (x, y)$. A hypothesis $h$ from the hypothesis space $H$ allows us to make predictions based on inputs $x$. The quality of the predictions is measured by a loss function $l(h, z)$, where the goal is to minimize the expected loss 
$L(h, D) = \EE_{z \sim D} l(h, z)$. 
Typically, $D$ is unknown and instead we are given a set of $m$ observations 
$S \sim D^m = \{z_i \sim D\}_{i=1}^m$, in which case the empirical error on $S$ is simply
$\hat{L}(h, S) = \frac{1}{m} \sum_{i=1}^m l(h, z_i)$. 

In the PAC-Bayesian setting,
we assume that the learner has prior knowledge of the hypothesis space $H$ in the form of a prior distribution $P(h)$. When the learner observes a training dataset $S$, it updates the prior into a posterior distribution $Q$. 
We formalize such a \emph{base learner} $Q(S, P)$ that takes a dataset and a prior as input and outputs a posterior. 

The expected error of the posterior $Q$ is called the Gibbs error $L(Q, D) = \EE_{h \sim Q} L(h, D)$, and its empirical counterpart is $\hat{L}(Q, S) = \EE_{h \sim Q} \hat{L}(h, S)$.
The PAC-Bayesian framework provides the following bound over $L(Q, D)$ based on its empirical estimate $\hat{L}(Q, S)$.

\begin{theorem} [\citep{alquier2016properties,germain2009pac}]
\label{thm:pac-super}
Given a data distribution $D$, a hypothesis space $H$, a prior $P$, a confidence level $\delta \in (0, 1]$, and $\beta > 0$, with probability at least $1 - \delta$ over samples $S \sim D^m$, we have for all posterior $Q$,
{\small\begin{align}
    &L(Q,D)
    \le \hat{L}(Q, S) + \frac{1}{\beta} \rbr{D_{KL}(Q \| P) + \log\frac{1}{\delta} } + \frac{m}{\beta}\Psi(\frac{\beta}{m}) \label{eq:super}
\end{align}}
where $\Psi(\beta) = \log \EE_{h \sim P} \EE_{z \sim D} \exp(\beta(l(h,z) - L(h, D)))$.
\end{theorem}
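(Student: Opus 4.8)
The plan is to combine a change-of-measure inequality, which discharges the ``for all $Q$'' quantifier in one shot, with a single application of Markov's inequality, which converts an exponential-moment estimate into a high-probability statement over the draw of $S$. The starting point is the Donsker--Varadhan variational formula: for a fixed sample $S$, for every measurable $\phi\colon H\to\R$ and every posterior $Q$ absolutely continuous with respect to $P$, one has $\EE_{h\sim Q}[\phi(h)] \le D_{KL}(Q\|P) + \log\EE_{h\sim P}[e^{\phi(h)}]$. I would instantiate this with $\phi(h) = \beta\,(L(h,D)-\hat L(h,S))$, so that the left side becomes $\beta\,(L(Q,D)-\hat L(Q,S))$. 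Dividing by $\beta>0$ gives, simultaneously for all $Q$,
\begin{align*}
L(Q,D) \;\le\; \hat L(Q,S) + \tfrac1\beta D_{KL}(Q\|P) + \tfrac1\beta\log\xi(S), \qquad \xi(S)\defeq\EE_{h\sim P}\bigl[e^{\beta(L(h,D)-\hat L(h,S))}\bigr].
\end{align*}
The crucial feature is that all dependence on $Q$ now sits inside the $D_{KL}$ term, while $\xi(S)$ is a nonnegative random variable depending only on $S$.

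Next, I would control $\xi(S)$ in high probability. Since $\EE_{S\sim D^m}[\xi(S)]$ is a fixed number (finite, by boundedness/integrability of the loss), Markov's inequality gives $\xi(S) \le \tfrac1\delta\,\EE_{S\sim D^m}[\xi(S)]$ with probability at least $1-\delta$ over $S\sim D^m$, hence $\log\xi(S)\le\log\tfrac1\delta + \log\EE_{S}[\xi(S)]$ on that event. It then remains to evaluate $\EE_S[\xi(S)]$: by Fubini I would swap $\EE_{h\sim P}$ with $\EE_{S\sim D^m}$, and then, because $S=\{z_i\}_{i=1}^m$ is i.i.d.\ from $D$ and $\hat L(h,S)=\tfrac1m\sum_i l(h,z_i)$, the inner expectation factorizes into a product of $m$ identical single-example moments:
\begin{align*}
\EE_{S}[\xi(S)] \;=\; \EE_{h\sim P}\Bigl[\bigl(\EE_{z\sim D}\,e^{\frac\beta m (L(h,D)-l(h,z))}\bigr)^{m}\Bigr] \;=\; \exp\!\bigl(m\,\Psi(\tfrac\beta m)\bigr),
\end{align*}
the last equality being the definition of $\Psi$ (the one-sample log-moment-generating function being used up to its usual sign symmetry). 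Substituting back, multiplying through by $\tfrac m\beta$, and collecting terms yields the stated inequality.

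The step I expect to be the main obstacle is ensuring the bound is genuinely uniform over \emph{all} posteriors $Q$, including sample-dependent ones chosen by the base learner after seeing $S$. This is exactly what forces the change-of-measure step to come first: the expectation over $S$ and the Markov step are applied to the $Q$-free quantity $\xi(S)$, so no union bound over (a cover of) posteriors is ever needed, and the $\log\tfrac1\delta$ price is paid only once. The remaining work---verifying the Donsker--Varadhan identity, justifying the use of Fubini, and confirming finiteness of $\EE_S[\xi(S)]$---is routine given mild conditions on the loss, and the per-example factorization that produces $\Psi(\beta/m)$ with the $m/\beta$ prefactor is a direct consequence of the i.i.d.\ assumption on $S$.
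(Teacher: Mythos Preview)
The paper does not give its own proof of this statement (it is quoted from \citep{alquier2016properties,germain2009pac}), but your Donsker--Varadhan change of measure, followed by Markov's inequality on the $Q$-free quantity $\xi(S)$, followed by i.i.d.\ factorization, is precisely the standard argument, and it is the same template the paper reuses in Appendix~A when proving its own meta-learning bounds (Theorems~\ref{thm:pac-meta2} and~\ref{thm:pac-meta3}). Your emphasis on why the bound is uniform over data-dependent $Q$ is also on point.

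One caveat on the last step: your claimed identity $\EE_S[\xi(S)]=\exp\bigl(m\,\Psi(\beta/m)\bigr)$ is not actually an equality with the single-sample $\Psi$ as defined in the statement. The factorization leaves the $m$-th power \emph{inside} $\EE_{h\sim P}$, namely $\EE_{h\sim P}\bigl[(\EE_{z\sim D}\,e^{(\beta/m)(L(h,D)-l(h,z))})^{m}\bigr]$, and the exponent carries $L(h,D)-l(h,z)$ rather than $l(h,z)-L(h,D)$; your parenthetical about ``sign symmetry'' does not repair this, since a moment generating function is not symmetric in the sign of its argument. The paper's own appendix makes the same elision (its $\Psi_1$ is written with $L-l$ and the $m$-th power is silently pulled through $\EE_{h\sim P}$), so the mismatch is really a notational artifact of how $\Psi$ is abbreviated here. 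If one defines the log-moment term directly as the $m$-sample quantity $\log\EE_{h\sim P}\EE_{S\sim D^m}\exp\bigl(\beta(L(h,D)-\hat L(h,S))\bigr)$, as in the original references, your argument is exact.
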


\paragraph{PAC-Bayesian for Meta-Learning}
In the meta-learning setting, the meta-learner observes different tasks $\tau_i = (D_i, m_i$) during the meta-training stage, where all tasks share the same data domain $Z$, hypothesis space $H$ and loss function $l(h, z)$. For each observed task $\tau_i$, the meta-learner observes a training set $S_i$ of size $m_i$ which is assumed to be sampled i.i.d. from its respective data distribution $D_i$ (that is, $S_i \in D_i^{m_i}$). 
We further assume that each task $\tau_i = (D_i, m_i)$ is drawn i.i.d. from an environment $T$, which itself is a probability distribution over the data distributions and the sample sizes. 
The goal of meta-learning is to extract knowledge from the observed tasks $\tau_i$, which can then be used as prior knowledge for learning on new (yet unobserved) target tasks $\tau = (D, m) \sim T$. 
This prior knowledge is represented as a prior distribution $P(h)$ over learning hypotheses $h$, and it is subsequently used by the base learner $Q(S, P)$ for inference over the target tasks.

In the meta-learning PAC-Bayes framework, the meta-learner presumes a hyper-prior $\Pcal(P)$ as a distribution over priors $P$. 
Upon observing datasets $S_1, \ldots, S_n$ from multiple tasks, the meta-learner updates the hyper-prior to a hyper-posterior $\Qcal(P)$. 
The performance of this hyper-posterior, also called the transfer-error, is measured as the expected Gibbs error when sampling priors $P$ from $\Qcal$ and applying the base learner:
\begin{align}
R(\Qcal, T) &:= \EE_{P \sim \Qcal} \EE_{(D,m) \sim T} \EE_{S \sim D^m} \sbr{L(Q(S, P), D)}.\label{eq:ge}
\end{align}
While $R(\Qcal, T)$ is unknown in practice, it can be estimated using the empirical error,
\begin{align}
    \hat{R}(\Qcal, S_{i=1}^n) := \EE_{P \sim \Qcal} \sbr{\frac{1}{n} \sum_{i=1}^n \hat{L}(Q(S_i, P), S_i)}. \label{eq:er}
\end{align}
In \citep{pentina14,rothfuss2020pacoh}, the following PAC-Bayesian meta-learning bound is provided:
\begin{theorem}[\citep{pentina14,rothfuss2020pacoh}]
\label{thm:pac-meta1-new}
Given a task environment $T$ and a set of $n$ observed tasks $(D_i, m_i) \sim T$, let $\Pcal$ be a fixed hyper-prior and $\lambda > 0$, $\beta > 0$, with probability at least $1-\delta$ over samples $S_1 \in D_1^{m_1}, \ldots, S_n \in D_n^{m_n}$, we have, for all base learner $Q$ and all hyper-posterior $\Qcal$,
{\small\begin{align}
    R(\Qcal, T) \le &\hat{R}(\Qcal, S_{i=1}^n) + \rbr{\frac{1}{\lambda} + \frac{1}{n\beta}} D_{KL}(\Qcal \| \Pcal)\nonumber\\
    &+\frac{1}{n\beta}\sum_{i=1}^n \EE_{P \sim \Qcal} \sbr{D_{KL}(Q(S_i, P) \| P)}+ C(\delta, \lambda, \beta, n, m_i).  \label{eq:task_union1_new}
\end{align}}
\end{theorem}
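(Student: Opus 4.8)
The plan is to prove the bound by a two-level PAC-Bayes argument, as in \citep{pentina14,rothfuss2020pacoh}, organized around the ``half-empirical'' bridging quantity
\[
  \bar R(\Qcal):=\EE_{P\sim\Qcal}\Bigl[\,\tfrac1n\sum_{i=1}^{n}L\bigl(Q(S_i,P),D_i\bigr)\Bigr],
\]
obtained from $R(\Qcal,T)$ by replacing the expectation over target tasks with an average over the $n$ observed tasks, while still scoring each base-learner output against its \emph{true} data distribution $D_i$ (not the sample $S_i$). Fixing the base learner $Q$, I split $R(\Qcal,T)-\hat R(\Qcal,S_{i=1}^n)=\bigl(R(\Qcal,T)-\bar R(\Qcal)\bigr)+\bigl(\bar R(\Qcal)-\hat R(\Qcal,S_{i=1}^n)\bigr)$, bound each difference on an event of probability at least $1-\delta/2$, and conclude by a union bound.

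\emph{Within-task level} ($\bar R-\hat R$). Write $\psi_i(P):=\log\EE_{h\sim P}\exp\bigl(\beta(L(h,D_i)-\hat L(h,S_i))\bigr)$. For every observed task $i$ and every prior $P$, the change-of-measure (Donsker--Varadhan) inequality applied to $h\mapsto\beta(L(h,D_i)-\hat L(h,S_i))$, from $Q(S_i,P)$ to $P$, gives \emph{deterministically}
\[
  L\bigl(Q(S_i,P),D_i\bigr)-\hat L\bigl(Q(S_i,P),S_i\bigr)\le\tfrac1\beta\,D_{KL}\bigl(Q(S_i,P)\,\|\,P\bigr)+\tfrac1\beta\,\psi_i(P).
\]
Averaging over $i$, taking $\EE_{P\sim\Qcal}$, and applying the change-of-measure inequality once more (now from $\Qcal$ to the \emph{data-independent} hyper-prior $\Pcal$, on the $Q$-free function $P\mapsto\sum_i\psi_i(P)$) produces the term $\tfrac1{n\beta}D_{KL}(\Qcal\|\Pcal)$ and leaves all data-dependence in the single, $Q$-free and $\Qcal$-free exponential moment $\EE_{P\sim\Pcal}\exp(\sum_i\psi_i(P))$. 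Since the tasks are independent and, within each task, $[0,1]$-valued empirical losses are sub-Gaussian over the $m_i$ points of $S_i$ (the same device as the $\Psi$ term of Theorem~\ref{thm:pac-super}), the expectation of this moment over the samples is at most $\exp(\sum_i\beta^2/(8m_i))$; Markov's inequality then yields, with probability at least $1-\delta/2$ and \emph{for all} $Q$ and $\Qcal$ at once, $\bar R(\Qcal)\le\hat R(\Qcal,S_{i=1}^n)+\tfrac1{n\beta}\sum_i\EE_{P\sim\Qcal}D_{KL}(Q(S_i,P)\|P)+\tfrac1{n\beta}D_{KL}(\Qcal\|\Pcal)+\tfrac1{n\beta}\bigl(\log\tfrac2\delta+\sum_i\tfrac{\beta^2}{8m_i}\bigr)$.

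\emph{Environment level} ($R-\bar R$). For every fixed prior $P$ the summands $L(Q(S_i,P),D_i)\in[0,1]$ are i.i.d.\ over the task-plus-data draws with common mean the per-prior transfer error $\EE_{(D,m)\sim T}\EE_{S\sim D^m}L(Q(S,P),D)$, so at inverse temperature $\lambda$ their centered average has log-moment-generating function at most $\lambda^2/(8n)$ by Hoeffding's lemma. Applying the change-of-measure inequality from $\Qcal$ to $\Pcal$ at rate $\lambda$, and then Markov's inequality to the residual exponential moment under $\Pcal$, gives (for the fixed $Q$) with probability at least $1-\delta/2$ and for all $\Qcal$, $R(\Qcal,T)\le\bar R(\Qcal)+\tfrac1\lambda D_{KL}(\Qcal\|\Pcal)+\tfrac1\lambda\bigl(\log\tfrac2\delta+\tfrac{\lambda^2}{8n}\bigr)$. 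Adding this to the within-task bound, merging the two $D_{KL}(\Qcal\|\Pcal)$ contributions into the stated $\bigl(\tfrac1\lambda+\tfrac1{n\beta}\bigr)$ coefficient, and collecting the residual constants into $C(\delta,\lambda,\beta,n,m_i)$, yields the claim.

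The step I expect to require the most care is the \emph{order of operations}: each change-of-measure must be performed pointwise in $\Qcal$, $Q$ and $P$, so that the only probabilistic step --- Markov's inequality --- is applied solely to exponential moments taken under the \emph{data-independent} measures $\Pcal$ and $P\sim\Pcal$. In the within-task step those moments are also free of the base learner, so the bound holds uniformly over all $Q$ and $\Qcal$; in the environment step the moment still involves $Q$ through $L(Q(S_i,P),D_i)$, so the statement there is to be read for a fixed base-learner map $Q$, with uniformity only over the adaptively chosen $\Qcal$ --- which is the whole role of changing measure to $\Pcal$. The remaining ingredients --- task-wise independence to factor $\EE\exp(\sum_i\psi_i(P))$ over $i$, Fubini to exchange the $S_i$- and $h$-expectations, Hoeffding's lemma both across the $n$ tasks and across the $m_i$ points within each task, and treating the observed sizes $m_i$ as given (hence the dependence of $C$ on each $m_i$) --- are routine.
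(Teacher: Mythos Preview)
Your proposal is correct and follows essentially the same two-level decomposition as the paper: the bridging quantity $\bar R(\Qcal)$ is exactly the paper's intermediate $\EE_{P\sim\Qcal}[\frac{1}{n}\sum_i L(Q(S_i,P),D_i)]$, and your ``within-task'' and ``environment'' steps are the paper's task-specific bound (Eq.~\eqref{eq:task_specific1_new}) and task-environment bound (Eq.~\eqref{eq:task_environment1}) respectively. The only cosmetic differences are that the paper performs a single joint change of measure from $\Pcal(P)\prod_i P(h_i)$ to $\Qcal(P)\prod_i Q_i(h_i)$ rather than your two sequential ones (these coincide here because the exponent is separable in the $h_i$), and that the paper leaves the residual moments as the abstract $\Psi_1,\Psi_2$ terms inside $C$ rather than specializing to your Hoeffding constants $\beta^2/(8m_i)$ and $\lambda^2/(8n)$; your careful remark that the environment-level moment still depends on the fixed map $Q$ is accurate and in fact more explicit than the paper's sketch.
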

Here $C(\delta, \lambda, \beta, n, m_i)$ contains $\Psi$ and $\frac{1}{\delta}$ terms as in Eq.\eqref{eq:super} (see Appendix \ref{sec:proof:thm5}), and can be bounded by a function that is independent of $\Qcal$ for both bounded and unbounded loss functions under moment constraints (see details in \citep{rothfuss2020pacoh}).
From a Bayesian perspective, meta-learning attempts to learn a good hyper-posterior $\Qcal$ such that for all tasks in the task environment $T$, the divergence terms ${D_{KL}(Q(S_i, P) \| P)}$ would be substantially smaller in expectation when $P \sim \Qcal$ compared to when $P \sim \Pcal$, such as in the ordinary supervised learning setting of Eq.\eqref{eq:super}.

The hyperparameters $\lambda$ and $\beta$ can be adjusted to balance between the first three terms of the bound and the $C$ function. Defining the harmonic mean of $m_i$ as $\tilde{m} = (\sum_{i=1}^n 1/n m_i)^{-1}$, a common choice is $\lambda \propto n$ and $\beta \propto \tilde{m}$\footnote{Another common choice is $\lambda\propto \sqrt{n}$ and $\beta\propto \sqrt{\tilde{m}}$, so that the bound is asymptotically consistent, and scales with $O(\frac{1}{\sqrt{\tilde{m}}})$. 
However, in practice the bound with $\beta \propto \tilde{m}$ is usually tighter~\citep{germain2016pac}.}.
In this case, the generalization gap $R(\Qcal, T) - \hat{R}(\Qcal, S_{i=1}^n)$ becomes at least $O(\frac{1}{\tilde{m}})$ (from the 3rd-term on the RHS of Eq.\ref{eq:task_union1_new}).
In the next section, we examine an assumption in this bound which makes it impractical for the few-shot setting.

\section{Bridging the Gap between Practice \& Theory of Few-Shot Meta-Learning}
\label{sec:meta_training_relaxed}
The previous PAC-Bayesian meta-learning bound (Theorem~\ref{thm:pac-meta1-new}) assumes that the number of training examples $m_i$ for the observed tasks $\tau_i$ and the number of training examples $m$ for the target task $\tau$ are drawn from the same distribution (i.e. $\EE_T[m_i]=\EE_T[m]$).
However, practical applications of meta-learning such as \citep{russakovsky2015imagenet,devlin2019bert} operate in a setting where there are far more training examples in the observed tasks than in the target task.
Moreover, focusing on the few-shot setting (where $m$ is particularly small) exposes a gap between theory and practice -- Theorem~\ref{thm:pac-meta1-new} is unable to use the large number of observed samples and can only produce a loose bound of $O(\frac{1}{m})$ which is ineffective at explaining the impressive generalization performance of meta-learning as reported in practice.

In this section we attempt to close this gap by deriving an effective PAC-Bayesian bound (Theorem~\ref{thm:pac-meta3}) tailored for the few-shot setting. 
Interestingly, the bounds derived in this section also provide PAC-Bayesian justifications for two practical algorithms, Reptile and MAML.

\subsection{Practical PAC-Bayesian Bounds for Few-Shot Meta-Learning}
\label{sec:meta_training_relaxed_1}
A first attempt at leveraging the larger number of examples $m_i$ in the observed tasks is to directly
%apply the same learning strategy as in Theorem~\ref{thm:pac-meta1-new},
follow the learning strategy of Theorem~\ref{thm:pac-meta1-new},
%(albeit a shift of task environment between the observed $\tilde{T}$ and the target $T$)
by bounding $R(\Qcal, T)$ using the empirical risk $\hat{R}(\Qcal, S_{i=1}^n)$,
%all available data $S_i \in D_i^{m_i}$ in observed task $\tau_i$ is used to train its base-learner $Q(S_i, P)$ and then the empirical error of the same data $S_i$ is used to bound the transfer error $R(\Qcal, T)$. 
with $S_i \in D_i^{m_i}$ and $(D_i, m_i) \sim \tilde{T}$, despite the change of task environment from $T$ to $\tilde{T}$. 
This slight generalization leads to the following bound (with proof in Appendix \ref{sec:proof:thm7}):
\begin{theorem}
\label{thm:pac-meta2}
For a target task environment $T$ and an observed task environment $\tilde{T}$ where $\EE_{\tilde{T}}[D]=\EE_T[D]$ and $\EE_{\tilde{T}}[m]\ge\EE_T[m]$, let $\Pcal$ be a fixed hyper-prior and $\lambda > 0$, $\beta > 0$, then with probability at least $1-\delta$ over samples $S_1 \in D_1^{m_1}, \ldots, S_n \in D_n^{m_n}$ where $(D_i, m_i) \sim \tilde{T}$, we have, for all base learners $Q$ and hyper-posterior $\Qcal$, 
{\small \begin{align}
    R(\Qcal, T) \le &\hat{R}(\Qcal, S_{i=1}^n) + \rbr{\frac{1}{\lambda} + \frac{1}{n\beta}} D_{KL}(\Qcal \| \Pcal)\nonumber\\
    &+\frac{1}{n\beta}\sum_{i=1}^n \EE_{P \sim \Qcal} \sbr{D_{KL}(Q(S_i, P) \| P)}
    + C(\delta, \lambda, \beta, n, m_i)
    + \Delta_{\lambda}(\Pcal, T, \tilde{T}),  \label{eq:task_union2}
\end{align}}
where $\Delta_{\lambda}(\Pcal, T, \tilde{T})=\frac{1}{\lambda}\log \EE_{P \in \Pcal} e^{\lambda(R(P, T) - R(P, \tilde{T}))}$.
\end{theorem}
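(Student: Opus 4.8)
The plan is to reduce the claim to the already-established meta-learning bound of Theorem~\ref{thm:pac-meta1-new} by inserting one extra change-of-measure step that absorbs the environment shift from $T$ to $\tilde T$. Writing $R(\Qcal,T)=\EE_{P\sim\Qcal}[R(P,T)]$ with $R(P,\cdot):=\EE_{(D,m)\sim\cdot}\EE_{S\sim D^m}[L(Q(S,P),D)]$, I would first decompose $R(\Qcal,T)=\big(R(\Qcal,T)-R(\Qcal,\tilde T)\big)+R(\Qcal,\tilde T)$. The second term is handled by the machinery behind Theorem~\ref{thm:pac-meta1-new}: since the observed $S_i\in D_i^{m_i}$ with $(D_i,m_i)\sim\tilde T$ form an i.i.d. sample from the legitimate environment $\tilde T$, the same derivation (a within-task application of Theorem~\ref{thm:pac-super} to each base-learner posterior $Q(S_i,P)$, followed by a meta-level change of measure against $\Pcal$) produces $\hat R(\Qcal,S_{i=1}^n)$, the $\frac{1}{n\beta}\sum_i\EE_{P\sim\Qcal}[D_{KL}(Q(S_i,P)\|P)]$ term, a $\frac{1}{n\beta}D_{KL}(\Qcal\|\Pcal)$ contribution, and the $C$ term, all phrased in terms of the observed sample sizes $m_i$.

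For the residual gap $R(\Qcal,T)-R(\Qcal,\tilde T)=\EE_{P\sim\Qcal}[R(P,T)-R(P,\tilde T)]$ I would apply the change-of-measure (Donsker--Varadhan) inequality with inverse temperature $\lambda$ against the hyper-prior, obtaining $\EE_{P\sim\Qcal}[R(P,T)-R(P,\tilde T)]\le\frac{1}{\lambda}D_{KL}(\Qcal\|\Pcal)+\frac{1}{\lambda}\log\EE_{P\sim\Pcal}e^{\lambda(R(P,T)-R(P,\tilde T))}$, whose last term is exactly $\Delta_\lambda(\Pcal,T,\tilde T)$. The hypotheses enter here: $\EE_{\tilde T}[D]=\EE_T[D]$ ensures the induced marginal over individual data points is identical under both environments, so the within-task exponential-moment ($\Psi$) quantities are unchanged by the shift and the only residual discrepancy between the two transfer errors is the sample-size effect captured by $\Delta_\lambda$; $\EE_{\tilde T}[m]\ge\EE_T[m]$ is what places us in the favorable regime, since the within-task terms and $C$ then scale with the larger observed sizes rather than with the target-task $m$.

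The step I expect to be the main obstacle is combining the two pieces without charging for $D_{KL}(\Qcal\|\Pcal)$ twice: naively adding the bound of Theorem~\ref{thm:pac-meta1-new}, which already contributes $\big(\frac{1}{\lambda}+\frac{1}{n\beta}\big)D_{KL}(\Qcal\|\Pcal)$, to the $\frac{1}{\lambda}D_{KL}(\Qcal\|\Pcal)$ above would give the wrong coefficient $\frac{2}{\lambda}+\frac{1}{n\beta}$. To recover the coefficient stated in \eqref{eq:task_union2} I would not run two separate change-of-measure steps but instead fold the environment shift into the \emph{single} meta-level change of measure already present in the proof of Theorem~\ref{thm:pac-meta1-new}. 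Concretely, inside that step one bounds an exponential moment of the form $\EE_{P\sim\Pcal}\,\EE_{(D_i,m_i)\sim\tilde T,\,S_i}\big[e^{\lambda(R(P,T)-A(P))}\big]$, where $A(P)$ is the sample-dependent empirical-plus-correction quantity whose task-sampling mean sits below $R(P,\tilde T)$; since the factor $e^{\lambda(R(P,T)-R(P,\tilde T))}$ does not depend on the observed data it pulls out of the task-sampling expectation, the remaining moment $\EE\big[e^{\lambda(R(P,\tilde T)-A(P))}\big]$ is controlled exactly as in Theorem~\ref{thm:pac-meta1-new}, and a final Markov step over the residual randomness turns $\log\EE_{P\sim\Pcal}e^{\lambda(R(P,T)-R(P,\tilde T))}$ into the additive term $\Delta_\lambda(\Pcal,T,\tilde T)$ with no new KL contribution. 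Reassembling the unchanged within-task bounds then yields \eqref{eq:task_union2}.
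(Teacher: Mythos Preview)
Your proposal is correct and, after your self-correction in the last paragraph, follows essentially the same route as the paper. The paper likewise leaves the task-specific bound \eqref{eq:task_specific1_new} unchanged and modifies only the task-environment step: it runs a \emph{single} Markov/change-of-measure argument on $\EE_{P\sim\Pcal}\big[e^{\lambda(R(P,T)-R_{S_{\tilde T}}(P))}\big]$, then on the right-hand side inserts $\pm R(P,\tilde T)$ so that the data-independent factor $e^{\lambda(R(P,T)-R(P,\tilde T))}$ separates off as $\Delta_\lambda$ while the residual moment is identified with $\Psi_2$ exactly as in Theorem~\ref{thm:pac-meta1-new}---this is precisely your ``fold the environment shift into the single meta-level change of measure'' idea, and it is what yields only one $\tfrac{1}{\lambda}D_{KL}(\Qcal\|\Pcal)$ rather than two.
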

When $\EE_{\tilde{T}}[m_i] \gg \EE_{T}[m]$, this decoupling of the task environments seems beneficial at first, because $O(\frac{1}{\tilde{m}})$ is smaller compared to Eq.\eqref{eq:task_union1_new} when $\beta \propto \tilde{m}$. 
Unfortunately however, Eq.\eqref{eq:task_union2} introduces an additional 
penalty term $\Delta_{\lambda}$, which increases as $\EE_{\tilde{T}}[\tilde{m}]$ gets larger.

To understand the influence of $\Delta_{\lambda}$, we plot the (blue) bound of Eq.\eqref{eq:task_union2} in Fig.\ref{fig:sinusoid_bound} by using the synthetic Sinusoid regression task (see details in Section \ref{sec:regression} and in Appendix \ref{appsubsec:bound}) where we fixed $m=5$ and varied $m_i$ from 5 to 100.
When $m_i=m=5$, Eq.\eqref{eq:task_union2} reduces to Eq.\eqref{eq:task_union1_new} and $\Delta_{\lambda}=0$. Contrary to intuition, increasing $m_i$ does not reduce the bound, but instead makes it worse due to the rapid increase of $\Delta_\lambda$.
\begin{wrapfigure}{R}{5cm}\vspace{-10pt}
\centering
    \includegraphics[width=0.33\textwidth]{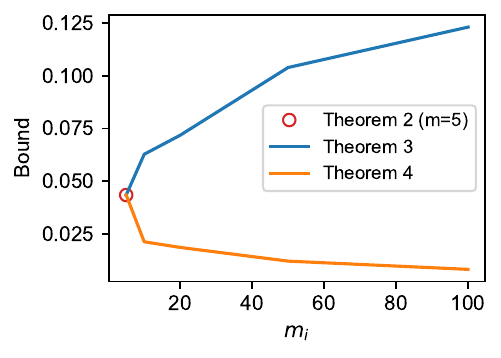}
    \caption{The PAC-Bayesian bounds of Theorems \ref{thm:pac-meta1-new}, \ref{thm:pac-meta2}, \& \ref{thm:pac-meta3} as evaluated over the Sinusoid dataset. Some constant terms are neglected (see Appendix \ref{appsubsec:bound} for more details).}
    \label{fig:sinusoid_bound}
\end{wrapfigure}
% TODO(dingnan): i feel a transition sentence is needed here, such as: Since $\Delta_\lambda$ is very detrimental, ... (Done)

Can we utilize more training examples without introducing a penalty term such as $\Delta_\lambda$?
In the definition of $\hat{R}(\Qcal, S_{i=1}^n)$ (Eq.\eqref{eq:er}), we note that the training dataset $S_i$ of the observed task $\tau_i$ is used twice: first in training the base-learner $Q(S_i, P)$, and then, in evaluating the empirical risk $\hat{L}(Q, S_i)$.
In analyzing the proof of the theorem (see Appendix~\ref{sec:proof:thm7}), it can be seen that the penalty term arises exactly because $Q(S_i, P)$ is trained over more samples compared to $Q(S, P)$ of the target task, which results in the more powerful base-learners during meta-training than the one for the target task.
%, due to the discrepancy between $\tilde{T}$ and $T$ which makes $\EE_{\tilde{T}}[m_i] > \EE_T[m]$ (where $|S_i|=m_i, |S|=m$).

This motivates us to develop a MAML-inspired learning strategy, in which we maintain the same target-task training environment $T$ for the base-learners of the observed tasks: we first sample a subset $S_i'  \in D_i^{m_i'}$ from $S_i$ where $m'_i$ and $m$ follow the same distribution and $m_i' \le m_i$. Then we use only the subset $S_i'$ to train the base-learner $Q(S_i', P)$. At the same time, all examples of $S_i \in D_i^{m_i}$ are used for evaluating the empirical risk $\hat{L}(Q, S_i)$, so that the larger $m_i$ in the empirical risk $\hat{L}(Q, S_i)$ help tightening the generalization gap.
This new strategy leads to the following bound (proof in Appendix \ref{sec:proof:thm9}):

\begin{theorem}
\label{thm:pac-meta3}
For a target task environment $T$ and an observed task environment $\tilde{T}$ where $\EE_{\tilde{T}}[D]=\EE_T[D]$ and $\EE_{\tilde{T}}[m]\ge\EE_T[m]$, let $\Pcal$ be a fixed hyper-prior and $\lambda > 0$, $\beta > 0$, then with probability at least $1-\delta$ over samples $S_1 \in D_1^{m_1}, \ldots, S_n \in D_n^{m_n}$ where $(D_i, m_i) \sim \tilde{T}$, and subsamples $S_1' \in D_1^{m_1'} \subset S_1, \ldots, S_n' \in D_n^{m_n'} \subset S_n$, where $\EE[m_i'] = \EE_T[m]$, we have, for all base learner $Q$ and all hyper-posterior $\Qcal$,
{\small\begin{align}
    R(\Qcal, T) \le & \EE_{P \sim \Qcal} \sbr{\frac{1}{n}\sum_{i=1}^n \hat{L}(Q(S_i', P), S_i)}
    + \rbr{\frac{1}{\lambda} + \frac{1}{n\beta}} D_{KL}(\Qcal \| \Pcal) \nonumber \\
    &+\frac{1}{n\beta}\sum_{i=1}^n \EE_{P \sim \Qcal} \sbr{D_{KL}(Q(S_i', P) \| P)} 
    + C(\delta, \lambda, \beta, n, m_i).\label{eq:task_union3}
\end{align}}
\end{theorem}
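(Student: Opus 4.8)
The plan is to re-run the two-level PAC-Bayesian argument underlying Theorem~\ref{thm:pac-meta1-new} and Theorem~\ref{thm:pac-meta2}, but with the dataset that \emph{trains} each base-learner decoupled from the dataset that \emph{evaluates} its empirical risk: the base-learner for observed task $i$ is $Q(S_i',P)$, fit on the small subsample $S_i'$ of size $m_i'$, while the empirical loss $\hat L(Q(S_i',P),S_i)$ is computed on the full sample $S_i$ of size $m_i$. The one genuinely new ingredient, which is what removes the discrepancy penalty $\Delta_\lambda$ of Theorem~\ref{thm:pac-meta2}, is the identity
\[
\EE_{(D_i,m_i)\sim\tilde{T}}\,\EE_{S_i\sim D_i^{m_i}}\,\EE_{S_i'\subset S_i}\big[L(Q(S_i',P),D_i)\big]=R(P,T)\qquad\text{for every fixed }P,
\]
which holds because a uniform subsample of $S_i\sim D_i^{m_i}$ is itself i.i.d.\ from $D_i$, so $S_i'\sim D_i^{m_i'}$, and because the hypotheses $\EE_{\tilde{T}}[D]=\EE_T[D]$ and $\EE[m_i']=\EE_T[m]$ are exactly what make $(D_i,m_i')$ distributed as a target task $(D,m)\sim T$. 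Thus the average base-level Gibbs error over the observed tasks is an unbiased estimator of the \emph{target} transfer error, with no environment mismatch and hence no $\Delta_\lambda$ term.

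First I would carry out the base-level step. Conditioning on $(D_i,m_i)$ and invoking Theorem~\ref{thm:pac-super} for task $i$ with prior $P$, sample $S_i$, and parameter $\beta$ --- using the full $S_i$ of size $m_i$ in the empirical term --- gives, uniformly over posteriors and hence legitimately for the data-dependent $Q(S_i',P)$ (the log-MGF term $\Psi_i$ in \eqref{eq:super} depends only on $P$ and $D_i$, not on the posterior), the inequality $L(Q(S_i',P),D_i)\le\hat L(Q(S_i',P),S_i)+\tfrac1\beta D_{KL}(Q(S_i',P)\,\|\,P)+\tfrac1\beta\log\tfrac1\delta+\tfrac{m_i}{\beta}\Psi_i(\tfrac{\beta}{m_i})$. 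Since the hyper-prior $\Pcal$ is fixed and the $n$ observed tasks are independent, I would keep this in exponential-moment form and combine the tasks multiplicatively rather than through a union bound, so that all the $\log\tfrac1\delta$ and $\Psi_i$ contributions gather into a single remainder.

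Then the meta-level step, which mirrors the proof of Theorem~\ref{thm:pac-meta1-new} with $S_i$ replaced by $S_i'$ wherever it feeds the base-learner: a Donsker--Varadhan change of measure from the hyper-prior $\Pcal$ to the hyper-posterior $\Qcal$, one further change of measure at the base level producing the $\tfrac1{n\beta}\sum_i\EE_{P\sim\Qcal}[D_{KL}(Q(S_i',P)\|P)]$ terms, and a single Markov step over the draw of $S_1,\dots,S_n$ and the subsamples, yield with probability $1-\delta$, for all $Q$ and all $\Qcal$, an upper bound on $\EE_{P\sim\Qcal}[\tfrac1n\sum_i L(Q(S_i',P),D_i)]$ of the form $\EE_{P\sim\Qcal}[\tfrac1n\sum_i\hat L(Q(S_i',P),S_i)]+(\tfrac1\lambda+\tfrac1{n\beta})D_{KL}(\Qcal\|\Pcal)+\tfrac1{n\beta}\sum_i\EE_{P\sim\Qcal}[D_{KL}(Q(S_i',P)\|P)]+C(\delta,\lambda,\beta,n,m_i)$, where $C$ gathers the $\log\tfrac1\delta$ and $\sum_i m_i\Psi_i(\cdot)$ terms exactly as in Theorems~\ref{thm:pac-meta1-new} and \ref{thm:pac-meta2}. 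Finally, substituting the unbiasedness identity above --- together with the same meta-level change-of-measure/Markov step used in Theorem~\ref{thm:pac-meta1-new} to pass from $\tfrac1n\sum_i L(Q(S_i',P),D_i)$ to its expectation $R(P,T)$ --- turns the left-hand side into $R(\Qcal,T)$, giving \eqref{eq:task_union3}.

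The main obstacle is the bookkeeping around the subsampling. One must check that using $S_i$ both to draw $S_i'$ (hence to define the posterior) and to compute $\hat L(Q(S_i',P),S_i)$ does not invalidate the exponential-moment inequality --- it does not, precisely because Theorem~\ref{thm:pac-super} is uniform in the posterior and its MGF term is posterior-free --- and that the nested expectations over $m_i'\mid D_i$, $S_i'\mid S_i$, $S_i\mid D_i$, and $(D_i,m_i)\sim\tilde{T}$ are taken in the order that makes the unbiasedness identity exact. A subsidiary point is pinning down the coupling and measurability of $m_i'$ so that $(D_i,m_i')\sim T$ on the nose; the assumptions $\EE_{\tilde{T}}[D]=\EE_T[D]$, $\EE[m_i']=\EE_T[m]$, and $\EE_{\tilde{T}}[m]\ge\EE_T[m]$ are exactly what license this.
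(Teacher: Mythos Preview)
Your proposal is correct and follows essentially the same two-level decomposition as the paper's proof: a task-specific PAC-Bayes step bounding $\EE_{P\sim\Qcal}[\tfrac{1}{n}\sum_i L(Q(S_i',P),D_i)]$ by the empirical risk on the full $S_i$ plus the KL terms (the paper does this in one shot via a joint change of measure from $\Pcal\times P^n$ to $\Qcal\times\prod_i Q(S_i',P)$, which is equivalent to your ``exponential-moment form then combine multiplicatively''), followed by the task-environment step of Theorem~\ref{thm:pac-meta1-new} to pass to $R(\Qcal,T)$. Your explicit articulation of the unbiasedness identity $\EE[L(Q(S_i',P),D_i)]=R(P,T)$ is exactly the content of the paper's one-line remark that ``the base-learner in observed and target task have the same task environment $T$,'' and is indeed the reason the $\Delta_\lambda$ penalty disappears.
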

This bound is still $O(\frac{1}{\tilde{m}})$ when choosing $\beta \propto \tilde{m}$, but unlike Eq.\eqref{eq:task_union2}, it does not have an additional penalty term in Eq.\eqref{eq:task_union3}, which is due to the shared training environment $T$ of the base-learners in both observed and target tasks.
Importantly, the resulting bound is effective in the few-shot setting as an increase in the number of observed examples $m_i$ monotonically tightens the generalization gap.
This is visually demonstrated in Fig.\ref{fig:sinusoid_bound} in which the (orange) bound of Eq.\eqref{eq:task_union3} in Theorem~\ref{thm:pac-meta3} is monotonically decreasing as $m_i$ increases, while the bound in Theorem~\ref{thm:pac-meta1-new} is limited only to $m_i=5$ and the bound of Theorem~\ref{thm:pac-meta2} grows.

\subsection{Justifying Reptile and MAML using PAC-Bayesian Bounds}
\label{sec:relation}
It is worth noting that Theorems \ref{thm:pac-meta2} and \ref{thm:pac-meta3} not only address more practical scenarios in which observed (meta-training) examples are more abundant than the target examples, but they also serve as a justification for popular and practical meta-learning algorithms: Reptile~\citep{nichol1803first} and MAML~\citep{finn2017model}.

To show this, let us consider the maximum-a-posteriori (MAP) approximations on the hyper-posterior $\Qcal(P)$ and base-leaner $Q_i(h), \forall i=1,\ldots,n$, with Dirac measures. In addition, we use the isotropic Gaussian priors with variance hyperparameter $\sigma_0^2$ and $\sigma^2$ for the hyper-prior $\Pcal(P)$ and the prior $P(h)$.
The hypothesis $h$ is parameterized by $\vb$.
Then we have
{\small\begin{align*}
    \Pcal(P) &= \Ncal(\pb | 0, \sigma_0^2), \;
    \Qcal(P) = \delta(\pb = \pb_0), \;
    P(h_{\vb}) = \Ncal(\vb | \pb, \sigma^2), \;
    Q_i(h_{\vb}) = \delta(\vb = \qb_i),
\end{align*}}
and the goal of MAP approximation is to find the optimal meta-parameters $\pb_0$. 
With the above assumptions, the PAC-Bayesian bound (denoted PacB) of Eq.\eqref{eq:task_union2} and Eq.\eqref{eq:task_union3} with respect to $\pb_0$ becomes (up to a constant, see Appendix \ref{sec:appendix-dirac}),
\begin{align}
PacB(\pb_0)
=&\frac{1}{n} \sum_{i=1}^n  \hat{L}(\qb_i, S_i) + \frac{\tilde{\xi}\|\pb_0\|^2}{2\sigma_0^2} + \frac{1}{n\beta}\sum_{i=1}^n \frac{\|\pb_0-\qb_i\|^2}{2\sigma^2}, \label{eq:pac_delta1}
\end{align}
where $\tilde{\xi} = \frac{1}{\lambda} + \frac{1}{n\beta}$. 
Here, $\qb_i$ can be any function of $\pb_0$ and $S_i$ for Eq.\eqref{eq:task_union2} (or $\pb_0$ and $S_i'$ for Eq.\eqref{eq:task_union3}), such that the only free variable in Eq.\eqref{eq:pac_delta1} is $\pb_0$. 
Indeed, by setting $\qb_i$ according to the choices below, we can derive the gradients of several meta-learning algorithms.

When $\qb_i = \pb_0$, the gradient of Eq.\eqref{eq:pac_delta1} reduces to that of multi-task pretraining~\citep{russakovsky2015imagenet,devlin2019bert},
\begin{align*}
   \lim_{\qb_i \to \pb_0}\frac{d (PacB)}{d \pb_0} = \frac{\tilde{\xi}\pb_0}{ \sigma_0^2} + \frac{1}{n}\sum_{i=1}^n \frac{d}{d \pb_0} \hat{L}(\pb_0, S_i).
\end{align*}
On the other hand, if we use the optimal Dirac-base-learner $\qb_i^*$ of $\pb_0$ and $S_i$, such that
\begin{align}
    \qb_i^* =& \argmin_{\qb_i} \sbr{\hat{L}(\qb_i, S_i) + \frac{\|\pb_0-\qb_i\|^2}{2\beta\sigma^2}}, \label{eq:optbase_pac_delta1}
\end{align}
then the gradient of Eq.\eqref{eq:pac_delta1} becomes substantially simpler (see details in the Appendix \ref{sec:appendix-dirac}),
\begin{align}
\frac{d (PacB)}{d \pb_0} %&= \frac{\partial (PacB)}{\partial \pb_0} + \inner{\frac{\partial \qb_i^*}{\partial \pb_0}} {\frac{\partial (PacB)}{\partial \qb_i^*}} \nonumber\\
%&= \frac{\partial (PacB)}{\partial \pb_0} \nonumber\\
&= \frac{\tilde{\xi}\pb_0}{ \sigma_0^2} + \frac{1}{n}\sum_{i=1}^n \frac{\pb_0-\qb_i^*}{\beta\sigma^2}, \label{eq:optgrad_pac_delta1}
%\\&=\frac{\pb_0}{\tilde{\xi} \sigma_0^2} + \frac{1}{n}\sum_{i=1}^n \nabla_{\qb^*_i} L(\qb_i^*, S_i). \label{eq:impgrad_pac_delta1}
\end{align}
and in fact, Eq.\eqref{eq:optgrad_pac_delta1} is equivalent to the meta-update rule
of the Reptile algorithm~\citep{nichol1803first}, whose inner-loop is an approximate algorithm for solving the optimal Dirac-base-learner $\qb_i^*$. 

Lastly, when $\qb_i$ is a few gradient descent steps of $\hat{L}(\qb_i, S_i')$ with initial $\qb_i=\pb_0$, the gradient of Eq.\eqref{eq:pac_delta1} reduces to that of the MAML algorithm\footnote{A slight difference is that MAML usually assumes $S_i \cap S_i' = \emptyset$; while in our setting, we assume $S_i' \subset S_i$. However, Theorem \ref{thm:pac-meta3} still holds when $S_i \cap S_i' = \emptyset$. 
}~\citep{finn2017model} as $\sigma^2 \to \infty$,
\begin{align*}
    \lim_{\sigma^2 \to \infty}\frac{d (PacB)}{d \pb_0} = \frac{\tilde{\xi}\pb_0}{ \sigma_0^2}  + \frac{1}{n}\sum_{i=1}^n \frac{d}{d \pb_0} \hat{L}(\qb_i, S_i).
\end{align*}
One observation here is that, since $\qb_i$ is function of the gradient of $\pb_0$, $d \qb_i/d \pb_0$ involves high-order gradient w.r.t. $\pb_0$, which would result in a computationally intensive algorithm. In the next section we present a computationally efficient algorithm which relies only on first-order derivatives.

\section{PAC-Bayesian Meta-Learning Algorithms in the Few-Shot Setting}
\label{sec:instantiation}
In this section we present two PAC-Bayesian based Meta-Learning algorithms with non-Dirac base-learners. 
We first derive their objective functions from the RHS of Eq.\eqref{eq:task_union2} and Eq.\eqref{eq:task_union3}, and then derive low-variance gradient estimators for their optimization.

First, since Eq.\eqref{eq:task_union1_new} and Eq.\eqref{eq:task_union2} only differ by $\Delta_{\lambda}$, we follow \citep{rothfuss2020pacoh} and 
plug in their proposed Gibbs posterior based base-learner $Q^*(S_i, P)(h) = P(h)\exp(-\beta \hat{L}(h, S_i)) / Z_\beta(S_i, P)$ into Eq.\eqref{eq:task_union2}, which minimizes Eq.\eqref{eq:task_union2} w.r.t. $Q$.
This yields that, with at least $1-\delta$ probability, 
{\small\begin{align}
    R(\Qcal, T) \le & \frac{1}{n}\sum_{i=1}^n\EE_{P \sim \Qcal} \underbrace{\sbr{-\frac{1}{\beta}  \log Z_\beta(S_i, P)}}_{W_1}
    + \tilde{\xi} D_{KL}(\Qcal \| \Pcal) + \Delta_{\lambda} + C \label{eq:inst-3}
\end{align}}
where $\tilde{\xi} = \frac{1}{\lambda} + \frac{1}{n\beta}$ and $C$ is the same constant from the previous bounds. %It is worth noting that $\Delta_\lambda$ is the minimum of two functions where the second function is dependent on $\Qcal$ but is inestimable in general. Therefore, we only consider the first function of $\Delta_\lambda$ which is a constant of $\Qcal$ and can be neglected during inference or optimization of $\Qcal$. This yields the same PACOH objective as in~\citep{rothfuss2020pacoh}.
Since $\Delta_\lambda$ is independent of $\Qcal$ and can be neglected during inference or optimization of $\Qcal$, it reduces to the same PACOH objective as in~\citep{rothfuss2020pacoh}.

On the other hand, the same Gibbs posterior cannot be used as the base learner of Eq.\eqref{eq:task_union3}, because the Gibbs posterior would depend on $S_i$, while the base learner in Eq.\eqref{eq:task_union3} should only be dependent on $S_i' \subset S_i$. Therefore, we use the following posterior $Q_i^\alpha$ with hyperparameter $\alpha$,
\begin{align*}
    Q_i^{\alpha}(S_i', P)(h) = \frac{P(h)\exp(-\alpha \hat{L}(h, S_i'))}{Z_\alpha(S_i', P)}.
\end{align*}

Plugging into Eq.\eqref{eq:task_union3} (derivations in Appendix) yields that, with at least $1-\delta$ probability,
{\small\begin{align}
    R(\Qcal, T) \le & \frac{1}{n}\sum_{i=1}^n\EE_{P \sim \Qcal} \underbrace{\sbr{-\frac{1}{\beta}  \log Z_\alpha(S_i', P) + \hat{L}^{\Delta}_{\frac{\alpha}{\beta}}(Q^{\alpha}_i, S_i, S_i')}}_{W_2}
    + \tilde{\xi} D_{KL}(\Qcal \| \Pcal) + C. \label{eq:inst-5}
\end{align}}
where $\hat{L}^{\Delta}_{\frac{\alpha}{\beta}}(Q^{\alpha}_i, S_i, S_i') \triangleq \hat{L}(Q^{\alpha}_i, S_i) - \frac{\alpha}{\beta}\hat{L}(Q_i^{\alpha}, S_i')$.
We refer to the RHS of this equation as the PACMAML objective, because Eq.\eqref{eq:inst-5} comes from the PAC-Bayesian bound of Eq.\eqref{eq:task_union3}, which
is similar to MAML in subsampling the training data for base-learners.

Given these two objectives, the next step is to estimate the gradients of $W_1$ and $W_2$, which can then be plugged into Monte-Carlo methods for estimating a hyper-posterior distribution of $\Qcal$ (or optimization methods for finding an MAP solution).

\paragraph{Gradient Estimation}
In $W_1$ and $W_2$, the terms $Z_{\beta}, Z_{\alpha}, \hat{L}^{\Delta}_{\frac{\alpha}{\beta}}(Q^{\alpha}_i, S_i, S_i')$ all involve integrations over $h$. When $P(h)$ is Gaussian and $\hat{L}(h, S_i)$ is a squared loss, such integrations have closed form solutions and the gradients can be analytically obtained. 
However, when $\hat{L}(h, S_i)$ is not a squared loss (such as the softmax loss), the integration does not have a closed form solution and we resort to approximations.
For example, \cite{rothfuss2020pacoh} directly approximates the objective $W_1$ with Monte-Carlo sampling, which however results in a biased gradient estimator. 

Here, we follow an alternative approach from the REINFORCE algorithm~\citep{Williams92}, which instead approximates \emph{the gradient of the objective} with Monte-Carlo methods, and has the benefit that the resulting gradient estimator is unbiased. 
Assuming that the model hypothesis $h$ is parameterized by $\vb$ such that $\hat{L}(h, S_i) \triangleq \hat{L}(\vb, S_i)$, and $\vb$ has prior $P(\vb) = \Ncal(\vb| \pb, \sigma^2)$ with meta-parameter $\pb$, then
\begin{align*}
\log Z_\beta(S_i, \pb) = \log \int \Ncal(\vb| \pb, \sigma^2)\exp(-\beta \hat{L}(\vb, S_i)) d\vb. %\label{eq:inst-3'}
\end{align*}
Note that $\pb$ appears in the probability distribution $\Ncal(\vb| \pb, \sigma^2)$ of the expectation, and the naive Monte-Carlo estimator of the gradient w.r.t. $\pb$ is known to exhibit high variance. To reduce the variance, we apply the reparameterization trick~\citep{kingma2013auto} and rewrite $\vb = \pb + \wb$ with $\wb \sim \Ncal(\wb| {\bf 0}, \sigma^2)$. This leads to the following gradient of $W_1$,
\begin{align}
\frac{d W_1}{d \pb}  &= -\frac{1}{\beta}\frac{d}{d \pb}\log Z_\beta(S_i, \pb) = \int Q^{\beta}_i(\wb; S_i) \frac{\partial \hat{L}(\pb + \wb, S_i)}{\partial \pb} d\wb, \label{eq:grad_w1} \\
&\text{where,} \;\; Q^{\beta}_i(\wb; S_i) \propto \Ncal(\wb| {\bf 0}, \sigma^2)\exp(-\beta \hat{L}(\pb + \wb, S_i)).\nonumber
\end{align}
As for $W_2$, we also need to evaluate the gradient of $\hat{L}^{\Delta}_{\frac{\alpha}{\beta}}(Q^{\alpha}_i, S_i, S_i')$, where
{\small  \begin{align}
  \frac{d}{d \pb}\hat{L}^{\Delta}_{\frac{\alpha}{\beta}}(Q^{\alpha}_i, S_i, S_i')
    &= \int  Q_i^{\alpha}(\wb; S_i') \frac{\partial \hat{L}^{\Delta}_{\frac{\alpha}{\beta}}(\pb+\wb, S_i, S_i')}{\partial \pb} d\wb + \int \frac{\partial Q_i^{\alpha}(\wb; S'_i)}{\partial \pb}\hat{L}^{\Delta}_{\frac{\alpha}{\beta}}(\pb+\wb, S_i, S_i') d\wb. \label{eq:l_grad}
\end{align}}
The first term of Eq.\eqref{eq:l_grad} is similar to the gradient in Eq.\eqref{eq:grad_w1}.
The Monte-Carlo gradient estimator of the second term, however, exhibits the same high-variance problem as in the policy gradient method. 
As a remedy, we approximate the gradient with the one from the Softmax Policy Gradient~\citep{ding2017cold}, %so that
%\begin{align*}
%    \int \frac{d Q_i^{\alpha}(\wb;S_i')}{d \pb} \hat{L}(\pb + \wb, S_i'') d\wb \simeq \frac{\alpha}{\beta} \int \rbr{Q_i^{\beta}(\wb;S_i) - Q_i^{\alpha}(\wb;S_i')} \frac{d \hat{L}(\pb + \wb, S'_i)}{d \pb} d\wb.
%\end{align*}
which yields a low-variance approximate gradient of $W_2$ (details in Appendix):
{\small\begin{align} 
    \frac{d  W_2}{d \pb} %&\simeq \frac{\alpha}{\beta} \int Q_i^{\beta}(\wb;S_i) \frac{\partial \hat{L}(\pb + \wb; S_i')}{\partial \pb} d\wb + \int Q_i^{\alpha}(\wb;S_i') \frac{\partial \hat{L}^{\Delta}_{\frac{\alpha}{\beta}}(\pb+\wb, S_i, S_i')}{\partial \pb} d\wb \\
    &\simeq \int Q_i^{\alpha}(\wb;S_i') \frac{\partial \hat{L}(\pb + \wb; S_i)}{\partial \pb} d\wb + \frac{\alpha}{\beta} \int \rbr{Q_i^{\beta}(\wb;S_i) - Q_i^{\alpha}(\wb;S_i')} \frac{\partial \hat{L}(\pb + \wb; S_i')}{\partial \pb} d\wb.
    \label{eq:grad_w2}
\end{align}}
The first-term in Eq.\eqref{eq:grad_w2} is similar to the gradient of the First-order MAML (FOMAML, \cite{finn2017model}). The second term involves $Q_i^{\beta}$ and $Q_i^{\alpha}$, which are similar to the leader and the chaser in BMAML~\citep{yoon2018bayesian}. Intuitively, the second term provides additional information that plays a similar role to the high-order derivatives in MAML. However, unlike MAML and BMAML, Eq.\eqref{eq:grad_w2} only involves partial derivatives over $\pb$ (since $\wb$ is not a function of $\pb$) and therefore relies only on first-order derivatives which contribute to its efficiency and stability.

To estimate Eq.\eqref{eq:grad_w1} and Eq.\eqref{eq:grad_w2} in practice, we first draw samples $\wb_{(n)}^{\alpha} \sim Q_i^{\alpha}(\wb;S_i')$ and $\wb_{(n)}^{\beta} \sim Q_i^{\beta}(\wb;S_i)$ using the Monte-Carlo sampling (e.g. SGLD~\citep{welling2011bayesian} or SVGD~\citep{liu2016stein}). After plugging the samples into $\hat{L}(\pb + \wb; S_i)$ and $\hat{L}(\pb + \wb; S'_i)$, we can apply automatic gradient computations (with Tensorflow~\citep{tensorflow2015-whitepaper} or Pytorch~\citep{NEURIPS2019_9015}) over $\pb$ to get the stochastic gradient estimator of $W_1$ and $W_2$.

\section{Experiments}
\label{sec:experiments}
In this section, we evaluate the two PAC-Bayesian algorithms as they were derived in the previous section: PACOH~\citep{rothfuss2020pacoh} of Eq.\eqref{eq:inst-3} and PACMAML of Eq.\eqref{eq:inst-5}.
We use several few-shot learning benchmarks (both synthetic and real), and compare them against other existing meta-learning algorithms, including MAML~\citep{finn2017model}, Reptile~\citep{nichol1803first}, and BMAML~\citep{yoon2018bayesian}. 
To fairly compare with other meta-learning algorithms that optimize a single model, we consider only the empirical Bayes method for PACOH and PACMAML, in which a single MAP solution of $\Qcal$ is used, instead of Bayesian ensembles of $\Qcal$.
% TODO(tomerl): 
% (0) What hypothesis would we like to confirm here? 
%     that PACMAML is the best algorithm?
%     that the experiment results will verify the theorems?

\subsection{Few-Shot Regression Problem} 
\label{sec:regression}
Our first set of experiments are based on the synthetic regression environment setup from \citep{rothfuss2020pacoh}, where the gradient can be obtained analytically. 
\begin{wrapfigure}{R}{5cm}\vspace{-20pt}
\centering
    \includegraphics[width=0.33\textwidth]{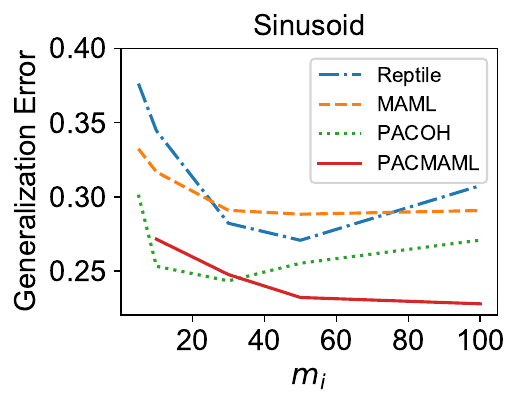}
    \caption{Generalization error (RMSE) on the Sinusoid dataset: 
    PACMAML and MAML continue to improve as $m_i$ increases.}
    \label{fig:sinusoid}
\end{wrapfigure}
% TODO(dingnan): is the order in which everything is presented correct?
The base-learners $Q(S, P)$ are modeled using Gaussian Process (GP) regression with a prior $P_\theta(h) = \mathcal{GP}(h|m_\theta(x),k_\theta(x,x'))$, where the mean function $m_\theta$ and the kernel function $k_\theta$ are instantiated as neural networks as in \citep{rothfuss2020pacoh}.
For every example $z_j = (x_j, y_j)$ and a hypothesis $h$, the loss function is $l(h, z_j) = \| h(x_j) - y_j \|_2^2$ and the empirical risk is $\hat{L}(h, S_i) = \frac{1}{m_i}\sum_{j=1}^{m_i} l(h, z_j)$. 
The hyper-prior $\Pcal(P_\theta) := \Pcal(\theta) = \Ncal(\theta|0, \sigma_0^2I)$ is an isotropic Gaussian defined over the network parameters $\theta$.
 The MAP approximated hyper-posterior takes the form of a delta function, where $\Qcal_{\theta_0}(P_\theta) := \Qcal_{\theta_0}(\theta)=\delta(\theta=\theta_0)$. 
As a result, we have that
%$D_{KL}(\Qcal_{\theta_0} \| \Pcal) = \frac{\|\theta_0\|^2}{2 \sigma_0^2}$
$D_{KL}(\Qcal_{\theta_0} \| \Pcal) = \|\theta_0\|^2/2 \sigma_0^2$, where we use $\sigma_0^2 = 3$ in our experiments. 

We experiment with the synthetic \textit{Sinusoid} environment (details in Appendix \ref{appsubsec:details}), where we fix the number of observed tasks $n = 20$, and vary the number of examples per observed tasks from $m_i \in \cbr{5, 10, 30, 50, 100}$. 
The number of training examples for each target task is fixed to be $m=5$, and another 100 examples for each target task are used as a test set to evaluate the generalization error. 
We report the averaged generalization error over 40 models, with the hyperparameters selected by 4-fold cross-validation over the 20 target tasks. 
Each model is trained on 1 of the 8 pre-sampled meta-training sets (each containing $n=20$ observed tasks) and each set is run with 5 random seeds for network initialization. $\alpha$ and $\beta$ are chosen based on the cross-validation from the grid $\beta/ m_i \in \cbr{10,30,100}$, and $\alpha/m_i \in \cbr{10,20,30,40,50,60}$.

Figure \ref{fig:sinusoid} shows the averaged generalization errors (RMSE) as $m_i$ changes, for the Reptile (with optimal $\qb_i^*$), MAML, PACOH, and PACMAML algorithms. The size of $S_i'$ used for base-learner training in MAML and PACMAML is $m_i'=5$ for all $m_i$. 
The hyperparameter values, the validation errors and the standard errors are reported in the Appendix \ref{appsubsec:additional-result}.
%In the SwissFEL dataset, we fix the number of observed tasks $n = 5$, and vary the number of examples per observed tasks from $m_i \in \cbr{10, 20, 50, 100, 200}$. The number of training examples for each target task is $m=10$. The result is averaged over 5 random seeds. The hyper-parameters are chosen based on a grid search (see appendix). 
As can be seen from the figure, the generalization errors of Reptile (blue) and PACOH (green), both derived from Theorem \ref{thm:pac-meta2}, have a U-shaped curve. 
That is, increasing the meta-training data $m_i$ initially improves generalization in the few-shot target tasks, however, as $m_i$ continues to grows well beyond $m$, generalization suffers. 
This confirms our conjecture from Theorem \ref{thm:pac-meta2}, that larger $m_i$ has a mixed effect on its generalization behavior due to the penalty term $\Delta_\lambda$.
In contrast, the generalization error of MAML and PACMAML, both derived from Theorem \ref{thm:pac-meta3}, is monotonically decreasing as desired.
%This reaffirms our findings in Fig.\ref{fig:sinusoid_bound}, that for PACMAML, increasing $m_i$ is always beneficial. 
Both the generalization error and the bound (in Fig.\ref{fig:sinusoid_bound}) demonstrate that PACMAML is the most effective strategy of utilizing larger meta-training data for few-shot learning.

\subsection{Few-shot Classification Problems}
In addition to the regression problems where the gradients have closed-form, our next experiments evaluate how PACMAML perform on classification tasks using softmax losses with gradient estimators from Eq.\eqref{eq:grad_w1} and Eq.\eqref{eq:grad_w2}. In order to fairly compare with MAML, which has only one set of inner adaptive parameters, we also only use one sample for approximating the inner posterior distribution $\Qcal_i^\alpha$ and $\Qcal_i^\beta$. 
% TODO(tomerl): is this handicapping our algorithm? did we try with more? can MAML try with more? (wait for rebuttal)
\paragraph{Image Classification}
Our first classification experiment is based on the miniImagenet classification task \citep{vinyals2016matching} involving a task adaptation of 5-way classification with a single training example per class (1-shot).
The dataset consists of 60,000 color images of 84×84 dimension. The examples consist of total 100 classes that are partitioned into 64, 12, and 24 classes for meta-train, meta-validation, and meta-test, respectively. We generated the tasks following the same procedure as in \citep{finn2017model} and used the same feature extraction model which contains 4 convolutional layers. Although the original MAML adapted the entire network in the inner loop, \citep{raghu2019rapid} showed similar results by adapting only the top layer, which significantly reduced computational complexity. 
We follow the same "almost no inner loop" (ANIL) setting as \citep{raghu2019rapid}, and compare MAML with BMAML, PACOH and PACMAML. 
Reptile is not included, because it requires full model adaptation.
%For all algorithms, we ran Adagrad for 6-steps in the inner loop to obtain the inner adaptive parameter or a posterior sample $\wb$. The data size of the observed tasks are fixed to be $m_i=16 \times 5 = 80$ following \citep{finn2017model} and $m_i' = m = 5$ (one shot for each of 5 classes). We fixed $\alpha / \beta = m_i' / m_i$ and perform grid search on $\alpha$ as well as the meta and inner learning rate. Other hyperparameters followed the same as \citep{finn2017model}. Details are reported in the appendix. 

For all algorithms, we optimize for 6 steps in the inner loop to obtain the inner adaptive parameter (or a posterior sample $\wb$).
The data sizes of the observed tasks are varied from $m_i=\cbr{10, 20, 40, 80}$ and $m_i' = m = 5$ (one shot for each of 5 classes). We fixed $\alpha / \beta = m_i' / m_i$ and perform grid search on $\alpha$ as well as the meta and inner learning rate on the meta-validation dataset. Other hyperparameters followed the setting in \citep{finn2017model}. Further details are reported in the Appendix. 
\begin{table}[!h]
\centering
  \begin{tabular}{c | c c c c c c c }
     & FOMAML & MAML & BMAML & PACOH & PACMAML \\\hline
%$m_i=5$ & - & - & - & 29.8 $\pm$ 0.8 & - \\
$m_i=10$ & 41.8 $\pm$ 0.9 & 47.3 $\pm$ 0.9 & 29.9 $\pm$ 0.9 & 31.2 $\pm$ 0.8 & {\bf 47.8 $\pm$ 0.9} \\
$m_i=20$ & 44.3 $\pm$ 0.9 & 48.0 $\pm$ 0.9 & 34.3 $\pm$ 0.9 & 37.0 $\pm$ 0.9 & {\bf 49.1 $\pm$ 0.9} \\
$m_i=40$ & 46.2 $\pm$ 1.0 & 47.8 $\pm$ 0.9 & 41.5 $\pm$ 0.9 & 41.6 $\pm$ 0.9 & {\bf 48.9 $\pm$ 0.9} \\
$m_i=80$ & 45.7 $\pm$ 0.9 & 48.1 $\pm$ 0.9 & 44.2 $\pm$ 0.9 & 44.6 $\pm$ 0.9 & {\bf 50.1 $\pm$ 0.9} \\
  \end{tabular}
  \caption{Averaged test accuracy and standard error in the ANIL setting.}
  \label{tab:mini}
 \end{table}\vspace{-10pt}

The main meta-testing results are presented in Table \ref{tab:mini}. 
We find that PACOH performs significantly worse than PACMAML.
One explanation for this is that in PACOH the base-learner (for top layer) is exposed to all $S$ data, and may have already overfit on $S$ and the meta-learner (for lower layers) is unable to learn further. The overfitting of the base-learner is more severe when $m_i$ is small. Surprisingly, we find that BMAML behaves similarly poor in the ANIL 1-particle setting. In FOMAML, MAML and PACMAML, the base-learner is only trained on $S'$ and the meta-learner can learn from the unseen examples in $S$ and therefore no overfitting happens. Both MAML and PACMAML performs significantly better than FOMAML when $m_i$ is small, but their performances saturate and improve little for larger $m_i$, which may due to the domain change between meta-training and testing (as the image class changes).
Overall, PACMAML as a first-order method not only significantly outperforms FOMAML, but also marginally outperforms the high-order MAML, which demonstrates the effectiveness of PACMAML and its gradient estimator.
% TODO(dingnan): verify with more examples that the U-shape is observed.

\paragraph{Natural Language Inference}
Lastly, we evaluate the meta-learning algorithms on the large-scale BERT-base~\citep{devlin2019bert} model containing 110M parameters. Our experiment involves 12 practical natural language inference tasks from~\citep{bansal2019learning} which include:\footnote{Data available at: \url{https://github.com/iesl/leopard}.} (1) entity typing: CoNLL-2003, MIT-Restaurant; (2) rating classification: the review
ratings from the Amazon Reviews dataset in the domain of Books, DVD, Electronics, Kitchen; (3) text classification: social-media datasets from crowdflower that include Airline, Disaster, Emotion, Political Bias, Political Audience, Political Message. 

Following \citep{bansal2019learning}, we used the pretrained BERT-base model as our base model (hyper-prior), and used GLUE benchmark tasks~\citep{wang2018glue} for meta-training the models and meta-validation for hyperparameter search, before fine-tuning them for the 12 target tasks. 
The fine-tuning data contains $k \in \cbr{4, 8, 16}$-shot data for each class in each task. For every $k$, 10 fine-tuning datasets were sampled for each target task. 
The final reported result is the average of the 10 models fine-tuned over these 10 datasets (for each task and each $k$ separately), and evaluated on the entire test set for each target task~\citep{bansal2019learning}. 
The data size of the observed tasks are fixed to be $m_i = 256$, where the data points for each observed task are randomly sampled from the training data of one of the GLUE tasks.
Because the number of classes in these 12 few-shot tasks varies from 2 to 12, we choose the inner data size $m_i'$ from $\cbr{32,64}$ for MAML, BMAML and PACMAML. 
As in \citep{bansal2019learning}, we also partition the set of model parameters to task-specific and task-agnostic. 
For the 12-layer BERT-base model, we consider a hyper-parameter $v \in \cbr{6, 9, 11, 12}$, where only the layers higher than the $v$-th layer are considered task-specific and will be adapted in the inner loop. 
When $v=12$, only the top classification layers are adaptable. For BMAML, PACOH and PACMAML, we performed grid search on $\alpha$ and fixed $\alpha / \beta = m_i' / m_i$.
\begin{table}[!h]
\centering
  \begin{tabular}{c | c c c c c c}
    $k$  & \tiny H-SMLMT~\citep{bansal2020self} & MAML & BMAML & PACOH & PACMAML \\\hline
    4 & 48.61 & 48.21 & 47.27 & 50.47 & \bf 51.58 \\
    8 & 52.92 & 53.52 & 52.08 & 54.83 & \bf 55.68 \\
   16 & 57.90 & 57.38 & 56.53 & 58.22 & \bf 59.18 \\ 
  \end{tabular}
%  \caption{Averaged Generalization error over the 12 NLI tasks.}
%  \label{tab:nli}
% \end{table}
%\begin{table}[!h]\vspace{-.3in}
%\centering
  \begin{tabular}{c | c c c c c }
      & $v$=6 & $v$=9 & $v$=11 & $v$=12 \\\hline
    MAML & 120G & 57G & 16G & 4G \\
    BMAML & 121G & 59G & 19G & 4G \\
    PACMAML & 33G & 16G & 8G & 4G \\
  \end{tabular}
  \caption{Top: Averaged test accuracy over the 12 NLI tasks. Bottom: The comparison of TPU memory (High Bandwidth Memory) usage with different adaptive layer thresholds $v$.}
  \label{tab:nli}
 \end{table}\vspace{-.1in}

Due to space limitation, we only report the averaged generalization error over the 12 tasks in Table \ref{tab:nli} (top). The detailed results of the 12 NLI tasks, their standard errors, as well as the hyperparameter selections are all included in the Appendix. We also include the SOTA results from~\citep{bansal2020self} for comparison and note that PACMAML is consistently the best performer over all three few-shot settings $k=4,8,16$. 
In comparison, MAML and BMAML perform worse, possibly due to sensitivity to learning rates, as suggested by~\citep{bansal2019learning}. 
Beyond generalization errors, in Table \ref{tab:nli} (bottom) we also compare the memory usage of MAML/BMAML against PACMAML over different adaptive layer thresholds $v$. 
These results emphasize the computational advantage of PACMAML by showing that as more layers are adapted (lower $v$), MAML consumes significantly more memory due to its high-order derivatives.

\section{Discussion}
% TODO(tomerl): we should mention
% (1) the problem we address
% (2) the theoretical contributions that are bridging the gap from theory to practice
%   2.1 As opposed to previous bounds, or bounds remain tight in the few shot setting
%   2.2 we provide theoretical justification for existing algorithms, where it was missing (MAML and Reptile)
% (3) experimental results over both synthetic and real data validate that our algorithms outperform the baselines, which we attribute to ??
% (4) future work.

% This paper notes that the current PAC-Bayesian theorems for Meta-Learning are unable to explain the impressive generalization ability of practical algorithms in the Few-Shot setting. To address this gap we extended the theory by formulating two PAC-Bayesian bounds for meta-learning in the practical scenario, where the number of examples in the observed tasks is larger compared to the target tasks.
% As opposed to previous bounds, our bound in thm-4 remains tight under this scenario.
% We then used these bounds to provide theoretical justification for existing meta-learning algorithms (Reptile and MAML) and furthermore, derived PACMAML, a new computationally efficient meta-learning algorithm.
% We conducted experimental results over synthetic data which validate our theoretical findings regarding the tightness of our bounds, and over real data (both vision and language) which show the PACMAML is a more effective meta-learning algorithm compared to other baseline algorithms.

We studied two PAC-Bayesian bounds for meta-learning in the few-shot case, where the number of examples in the target task is significantly smaller than that in the observed tasks.
As opposed to previous bounds, our bound in Theorem \ref{thm:pac-meta3} remains tight in this scenario.
We instantiated these new bounds and related them to the Reptile and MAML algorithms and furthermore derived the PACMAML algorithm, and showed its efficiency and effectiveness over several meta-learning benchmarks.
Broadly speaking, our work falls into the category of PAC-Bayesian theories of non-i.i.d data~\citep{pentina2015lifelong}; however, our study case is more specific and our bounds are based on practical strategies. 
One major limitation of the work is that we do not take into account a data domain shift (e.g.~\citep{germain2016new}), which is often present in practice. 
However, the study of domain shift from a theoretical perspective requires additional assumptions and knowledge about the target data, which do not always exist in practice. 
We leave a deeper discussion and exploration on these topics to future work.

\bibliography{main}
\bibliographystyle{abbrv}

\newpage
\appendix

\section{Proofs}
In this section, we provide proofs of the main theorems presented in the paper. We also provide a brief overview of the proof of Theorem \ref{thm:pac-meta1-new} from \citep{pentina2015lifelong,rothfuss2020pacoh}, since the bound decomposition strategy will also be used in the new theorems of the paper.

\subsection{Brief Proof of Theorem \ref{thm:pac-meta1-new} \cite{pentina2015lifelong,rothfuss2020pacoh}}
\label{sec:proof:thm5}
Given a task environment $T$ and a set of $n$ observed tasks $(D_i, m_i) \sim T$, let $\Pcal$ be a fixed hyper-prior and $\lambda > 0$, $\beta > 0$, with probability at least $1-\delta$ over samples $S_1 \in D_1^{m_1}, \ldots, S_n \in D_n^{m_n}$, we have for all base learner $Q$ and all hyper-posterior $\Qcal$,
\begin{align*}
    R(\Qcal, T) \le &\hat{R}(\Qcal, S_{i=1}^n) + \tilde{\xi} D_{KL}(\Qcal \| \Pcal) \nonumber \\
    &+\frac{1}{n\beta}\sum_{i=1}^n \EE_{P \sim \Qcal} \sbr{D_{KL}(Q(S_i, P) \| P)} + C(\delta, \lambda, \beta, n, m_i),
\end{align*}
where $\tilde{\xi} =\frac{1}{\lambda} + \frac{1}{n\beta}$.

\begin{proof}
The bound in Theorem \ref{thm:pac-meta1-new} was proved by decomposing it into two components: 
\begin{itemize}
\item "Task specific generalization bound", that bounds the generalization error averaged over all observed tasks $\tau_i$:
\begin{align}
    &\EE_{P\sim \Qcal}[\frac{1}{n}\sum_{i=1}^n L(Q(S_i, P), D_i)] \nonumber\\ 
    \le& \hat{R}(\Qcal, S_{i=1}^n) + \frac{1}{n\beta} D_{KL} (\Qcal \| \Pcal) + \frac{1}{n\beta} \sum_{i=1}^n \EE_{P \sim \Qcal} \sbr{D_{KL}(Q(S_i, P) \| P)}\nonumber\\
    &+ \frac{1}{n\beta} \log \frac{1}{\delta} + \frac{1}{n}\sum_{i=1}^n \frac{m_i}{\beta}\Psi_1(\frac{\beta}{m_i}) \label{eq:task_specific1_new}
\end{align}
where 
\begin{align*}
\hat{R}(\Qcal, S_{i=1}^n)&=\EE_{P\sim \Qcal}[\frac{1}{n}\sum_{i=1}^n\hat{L}(Q(S_i, P), S_i)], \\  
\Psi_1(\beta) &= \log\EE_{P \sim \Pcal}\EE_{\hb \sim P} \EE_{z_{ij} \sim D_i} \sbr{e^{\beta (\EE_{z_i \sim D_i}[l(h_i, z_i)] - l(h_i, z_{ij}))}}.
\end{align*}

\item "Task environment generalization bound", that bounds the transfer error from the observed tasks to the new target tasks:
\begin{align}
    R(\Qcal, T) \le &\frac{1}{n}\sum_{i=1}^n\EE_{P \sim \Qcal} \sbr{L(Q(S_i, P), D_i) } \nonumber\\
    &+ \frac{1}{\lambda} \rbr{D_{KL}(\Qcal \| \Pcal) + \log\frac{1}{\delta}}+ \frac{n}{\lambda}\Psi_2(\frac{\lambda}{n}).
    \label{eq:task_environment1}
\end{align}
where 
\begin{align*}
    \Psi_2(\lambda) &= \log \EE_{P \sim \Pcal}\EE_{D_i \sim T, S_i \sim D_i^{m_i}} \sbr{e^{\lambda(\EE_{D_i \sim T, S_i \sim D_i^{m_i}}[R_{S_i}(P)] - R_{S_i}(P))}}.
\end{align*}
\end{itemize}
%NOTE(tomerl): some minor changes here:
Detailed proofs of these two generalization bounds can be found in the appendices of \citep{pentina2015lifelong,rothfuss2020pacoh}. Subsequently, combining Eq.\eqref{eq:task_specific1_new} with Eq.\eqref{eq:task_environment1}, it is straightforward to get Eq.\eqref{eq:task_union1_new}, with
\begin{align}
C(\delta, \lambda, \beta, n, m_i) = \tilde{\xi} \log \frac{1}{\delta} + \frac{1}{n}\sum_{i=1}^n \frac{m_i}{\beta}\Psi_1(\frac{\beta}{m_i}) + \frac{n}{\lambda}\Psi_2(\frac{\lambda}{n}). \label{eq:c}
\end{align}
\end{proof}

\subsection{Proof of Theorem \ref{thm:pac-meta2}}
\label{sec:proof:thm7}
For a target task environment $T$ and an observed task environment $\tilde{T}$ where $\EE_{\tilde{T}}[D]=\EE_T[D]$ and $\EE_{\tilde{T}}[m]\ge\EE_T[m]$, let $\Pcal$ be a fixed hyper-prior and $\lambda > 0$, $\beta > 0$, then with probability at least $1-\delta$ over samples $S_1 \in D_1^{m_1}, \ldots, S_n \in D_n^{m_n}$ where $(D_i, m_i) \sim \tilde{T}$, we have, for all base learners $Q$ and hyper-posterior $\Qcal$, 
\begin{align*}
    R(\Qcal, T) \le &\hat{R}(\Qcal, S_{i=1}^n) + \tilde{\xi} D_{KL}(\Qcal \| \Pcal) +\frac{1}{n\beta}\sum_{i=1}^n \EE_{P \sim \Qcal} \sbr{D_{KL}(Q(S_i, P) \| P)}\nonumber\\
    &+ C(\delta, \lambda, \beta, n, m_i) + \Delta_{\lambda}(\Pcal, T, \tilde{T}), 
\end{align*}
where $\Delta_{\lambda}(\Pcal, T, \tilde{T})=\frac{1}{\lambda}\log \EE_{P \in \Pcal} e^{\lambda(R(P, T) - R(P, \tilde{T}))}$, and $\tilde{\xi} =\frac{1}{\lambda} + \frac{1}{n\beta}$.
\begin{proof}
The "task specific generalization bound" has the same form as Eq.\eqref{eq:task_specific1_new}. 

For the "task environment generalization bound", define the "meta-training" generalization error of a given prior $P$ on the observed task $(D_1, m_1), \ldots, (D_n, m_n) \sim \tilde{T}$ as
\begin{align*}
    R_{S_{\tilde{T}}}(P) \triangleq& \frac{1}{n}\sum_{i=1}^n L(Q(S_i, P), D_i)\\
    =&\frac{1}{n}\sum_{i=1}^n \EE_{z_i \sim D_i} \EE_{h_i \sim Q(h_i|P, S_i)} [L(h_i, z_i)],
\end{align*}
where $S_i \sim D_i^{m_i}$ and $S_{\tilde{T}} = \cbr{S_1, \ldots, S_n}$. 
Similarly, the generalization error on the target task environment $T$ is
\begin{align*}
    R(P, T) = \EE_{(D, m) \sim T}\EE_{S \sim D^m}\EE_{z \in D}\EE_{h \sim Q(h|P, S)} [L(h, z)].
\end{align*}
Using the Markov Inequality, with at least $1-\delta$ probability, 
\begin{align*}
   & \EE_{P \sim \Pcal} \sbr{e^{\lambda (R(P, T) - R_{S_{\tilde{T}}}(P))}} \\
   \le& \frac{1}{\delta} \EE_{P \sim \Pcal}\EE_{D_i \sim T, S_i \sim D_i^{m_i}}^{i=1,\ldots,n} \sbr{e^{\lambda(R(P, T) - R_{S_{\tilde{T}}}(P))} }.
\end{align*}
The left-hand side can be lower bounded by, 
\begin{align*}
    &\log \EE_{P \sim \Pcal} \sbr{e^{\lambda (R(P, T) - R_{S_{\tilde{T}}}(P))}}\\
    =& \log \EE_{P \sim \Qcal} \frac{\Pcal(P)}{\Qcal(P)} e^{\lambda(R(P, T) - R_{S_{\tilde{T}}}(P))}\\
    \ge& \EE_{P\sim \Qcal} \log \frac{\Pcal(P)}{\Qcal(P)} + \lambda\EE_{P\sim \Qcal} [R(P, T) - R_{S_{\tilde{T}}}(P)]\\
    =& -D_{KL}(\Qcal \| \Pcal) + \lambda(R(\Qcal, T) - \EE_{P\sim \Qcal} [R_{S_{\tilde{T}}}(P)]).
\end{align*}
The right-hand side is upper bounded by
\begin{align}
    &\log \frac{1}{\delta} \EE_{P \sim \Pcal}\EE_{D_i \sim T, S_i \sim D_i^{m_i}}^{i=1,\ldots,n} \sbr{e^{\lambda(R(P, T) - R_{S_{\tilde{T}}}(P))} } \nonumber\\
    =& \log \frac{1}{\delta} + \log \EE_{P \sim \Pcal}\EE_{D_i \sim T, S_i \sim D_i^{m_i}}^{i=1,\ldots,n} \sbr{e^{\lambda(R(P, T) - R_{S_{\tilde{T}}}(P))}} \nonumber\\
    =& \log \frac{1}{\delta} + \log \EE_{P \sim \Pcal} \sbr{e^{\lambda(R(P, T) - \EE_{S_{\tilde{T}} \sim \tilde{T}}[R_{S_{\tilde{T}}}(P)])}}\nonumber\\
    &+ \log \EE_{P \sim \Pcal}\EE_{D_i \sim T, S_i \sim D_i^{m_i}}^{i=1,\ldots,n} \sbr{e^{\lambda(\EE_{S_{\tilde{T}}}[R_{S_{\tilde{T}}}(P)] - R_{S_{\tilde{T}}}(P))}} \nonumber\\
    \le& \log \frac{1}{\delta} + \log \EE_{P \sim \Pcal} \sbr{e^{\lambda(R(P, T) - \EE_{S_{\tilde{T}}}[R_{S_{\tilde{T}}}(P)])}} + n\Psi_2(\frac{\lambda}{n}), \label{eq:proof3-1}
\end{align}
where,
\begin{align*}
    &\EE_{S_{\tilde{T}}}[R_{S_{\tilde{T}}}(P)]\\
    \triangleq & \EE_{(D_i, m_i) \sim \tilde{T}, S_i \sim D_i^{m_i}}^{i=1,\ldots,n} [R_{S_{\tilde{T}}}(P)]\\
    =& \frac{1}{n} \sum_{i=1}^n \EE_{(D_i, m_i) \sim \tilde{T}}\EE_{S_i \sim D_i^{m_i}}\EE_{z_i \in D_i}\EE_{h_i \sim Q(h_i|P, S_i)} [L(h_i, z_i)]\\
    =& \EE_{(D, m) \sim \tilde{T}}\EE_{S \sim D^{m}}\EE_{z \in D}\EE_{h \sim Q(h|P, S)} [L(h, z)]\\
    =& R(P, \tilde{T}).
\end{align*}
Combining the left-hand and right-hand bounds together, we have with at least probability $1-\delta$,
\begin{align}
    R(\Qcal, T) \le &\frac{1}{n}\sum_{i=1}^n\EE_{P \sim \Qcal} \sbr{L(Q(S_i, P), D_i) } \nonumber\\
    &+ \frac{1}{\lambda} \rbr{D_{KL}(\Qcal \| \Pcal) + \log\frac{1}{\delta} + n\Psi_2(\frac{\lambda}{n})} \nonumber\\
    &+ \frac{1}{\lambda}\log \EE_{P \in \Pcal} e^{\lambda(R(P, T) - R(P, \tilde{T}))}. \label{eq:task_environment2-1}
\end{align}
Lastly, combining Eq.\eqref{eq:task_environment2-1} with Eq.\eqref{eq:task_specific1_new} yields Eq.\eqref{eq:task_union2}.
\end{proof}
Furthermore, from Theorem \ref{thm:pac-meta2}, it is straightforward to obtain the following corollary.
\begin{corollary}
For a target task environment $T$ and an observed task environment $\tilde{T}$ where $\EE_{\tilde{T}}[D]=\EE_T[D]$ and $\EE_{\tilde{T}}[m]\ge\EE_T[m]$, let $\Pcal$ be a fixed hyper-prior and $\lambda > 0$, $\beta > 0$, then with probability at least $1-\delta$ over samples $S_1 \in D_1^{m_1}, \ldots, S_n \in D_n^{m_n}$ where $(D_i, m_i) \sim \tilde{T}$, we have, for all base learners $Q$ and hyper-posterior $\Qcal$, 
\begin{align}
    R(\Qcal, T) \le &\hat{R}(\Qcal, S_{i=1}^n) + \tilde{\xi} D_{KL}(\Qcal \| \Pcal) +\frac{1}{n\beta}\sum_{i=1}^n \EE_{P \sim \Qcal} \sbr{D_{KL}(Q(S_i, P) \| P)}\nonumber\\
    &+ C(\delta, \lambda, \beta, n, m_i) + \Delta_{\lambda}(\Pcal, \Qcal, T, \tilde{T}), \label{eq:task_union2'}
\end{align}
where $\Delta_{\lambda}(\Pcal, \Qcal, T, \tilde{T})=\min \cbr{\frac{1}{\lambda}\log \EE_{P \in \Pcal} e^{\lambda(R(P, T) - R(P, \tilde{T}))}, R(\Qcal, T) - R(\Qcal, \tilde{T})}$, and $\tilde{\xi} =\frac{1}{\lambda} + \frac{1}{n\beta}$.
\end{corollary}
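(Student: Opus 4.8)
The plan is to read \eqref{eq:task_union2'} as the conjunction of two sub-claims -- that $R(\Qcal,T)$ is bounded by the stated right-hand side once with penalty $\frac{1}{\lambda}\log\EE_{P\in\Pcal}e^{\lambda(R(P,T)-R(P,\tilde T))}$ and once with penalty $R(\Qcal,T)-R(\Qcal,\tilde T)$ -- and then to keep the smaller of the two. The first sub-claim is not new: its penalty is exactly $\Delta_\lambda(\Pcal,T,\tilde T)$, so Theorem~\ref{thm:pac-meta2}, i.e. \eqref{eq:task_union2}, already delivers it with probability at least $1-\delta$. Hence essentially all the work is in the second sub-claim.

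For the second sub-claim, the key observation is that Theorem~\ref{thm:pac-meta1-new} is phrased for a single, unnamed task environment and says nothing about a target environment distinct from the one that generates the observed tasks. I would therefore apply it \emph{verbatim}, with $\tilde T$ playing the role of its task environment; this is legitimate because the hypothesis of the corollary supplies observed tasks with $(D_i,m_i)\sim\tilde T$ and $S_i\in D_i^{m_i}$. This produces, with probability at least $1-\delta$ over $S_1,\dots,S_n$,
\begin{align*}
R(\Qcal,\tilde T)\;\le\;\hat{R}(\Qcal, S_{i=1}^n) + \tilde{\xi}\,D_{KL}(\Qcal\|\Pcal) + \frac{1}{n\beta}\sum_{i=1}^n\EE_{P\sim\Qcal}\sbr{D_{KL}(Q(S_i,P)\|P)} + C(\delta,\lambda,\beta,n,m_i).
\end{align*}
Since $R(\Qcal,T)-R(\Qcal,\tilde T)$ is deterministic -- a fixed functional of $\Qcal,T,\tilde T$ that does not depend on the realized samples -- adding it to both sides and using the trivial identity $R(\Qcal,T)=R(\Qcal,\tilde T)+(R(\Qcal,T)-R(\Qcal,\tilde T))$ yields precisely \eqref{eq:task_union2'} with the second argument of the minimum.

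It remains to make the two statements hold simultaneously. Both the bound of Theorem~\ref{thm:pac-meta2} and the one just obtained are built from the \emph{same} task-specific generalization bound \eqref{eq:task_specific1_new}; they diverge only in the task-environment step, where one applies a Markov inequality to $\EE_{P\sim\Pcal}e^{\lambda(R(P,T)-R_{S_{\tilde T}}(P))}$ and the other to $\EE_{P\sim\Pcal}e^{\lambda(R(P,\tilde T)-R_{S_{\tilde T}}(P))}$. Splitting the confidence level between these two Markov events (an extra additive term of order $\log 2$ that is absorbed into $C$, which already carries $\log\frac{1}{\delta}$ terms), on the intersection both bounds hold, so $R(\Qcal,T)$ is at most the common right-hand side with either penalty, hence with their minimum. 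The only point that calls for care is exactly this probability bookkeeping -- verifying that the two high-probability events can be taken together without inflating $C$ beyond its stated form; every other step is a direct invocation of Theorems~\ref{thm:pac-meta1-new} and \ref{thm:pac-meta2} together with the deterministic splitting of $R(\Qcal,T)$, so I do not expect any substantive analytic obstacle.
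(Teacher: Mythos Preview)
Your proposal is correct and follows essentially the same approach as the paper: obtain the $R(\Qcal,T)-R(\Qcal,\tilde T)$ penalty by applying the task-environment bound \eqref{eq:task_environment1} with $\tilde T$ in place of $T$ and then adding the deterministic difference, then combine with \eqref{eq:task_environment2-1} and the shared task-specific bound \eqref{eq:task_specific1_new} to take the minimum. If anything, you are more explicit than the paper about the union-bound bookkeeping needed to make both task-environment inequalities hold simultaneously.
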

\begin{proof}
Similar to \eqref{eq:task_environment1}, we have
\begin{align*}
    R(\Qcal, \tilde{T}) \le &\frac{1}{n}\sum_{i=1}^n\EE_{P \sim \Qcal} \sbr{L(Q(S_i, P), D_i) } \nonumber\\
    &+ \frac{1}{\lambda} \rbr{D_{KL}(\Qcal \| \Pcal) + \log\frac{1}{\delta} + n\Psi_2(\frac{\lambda}{n})}.
\end{align*}
A simple reorganization of the terms leads to,
\begin{align}
    R(\Qcal, T) \le &\frac{1}{n}\sum_{i=1}^n\EE_{P \sim \Qcal} \sbr{L(Q(S_i, P), D_i) } \nonumber\\
    &+ \frac{1}{\lambda} \rbr{D_{KL}(\Qcal \| \Pcal) + \log\frac{1}{\delta} + n\Psi_2(\frac{\lambda}{n})} + (R(\Qcal, T) - R(\Qcal, \tilde{T})). \label{eq:task_environment2-2}
\end{align}
Combining Eq.\eqref{eq:task_environment2-2} with Eq.\eqref{eq:task_environment2-1} and Eq.\eqref{eq:task_specific1_new} gives the bound in Eq.\eqref{eq:task_union2'}.
\end{proof}
Note that although Eq.\eqref{eq:task_union2'} gives a potentially tighter bound than Eq.\eqref{eq:task_union2}, empirically it makes little difference because $R(\Qcal, T) - R(\Qcal, \tilde{T})$ is inestimable in practice and cannot be directly optimized as a function of $\Qcal$. We will only numerically estimate its value in synthetic datasets in order to estimate the bound. 

\subsection{Proof of Theorem \ref{thm:pac-meta3}}
\label{sec:proof:thm9}
For a target task environment $T$ and an observed task environment $\tilde{T}$ where $\EE_{\tilde{T}}[D]=\EE_T[D]$ and $\EE_{\tilde{T}}[m]\ge\EE_T[m]$, let $\Pcal$ be a fixed hyper-prior and $\lambda > 0$, $\beta > 0$, then with probability at least $1-\delta$ over samples $S_1 \in D_1^{m_1}, \ldots, S_n \in D_n^{m_n}$ where $(D_i, m_i) \sim \tilde{T}$, and subsamples $S_1' \in D_1^{m_1'} \subset S_1, \ldots, S_n' \in D_n^{m_n'} \subset S_n$, where $\EE[m_i'] = \EE_T[m]$, we have, for all base learner $Q$ and all hyper-posterior $\Qcal$,
{\small\begin{align*}
    R(\Qcal, T) \le & \EE_{P \sim \Qcal} \sbr{\frac{1}{n}\sum_{i=1}^n \hat{L}(Q(S_i', P), S_i)} + \tilde{\xi} D_{KL}(\Qcal \| \Pcal) +\frac{1}{n\beta}\sum_{i=1}^n \EE_{P \sim \Qcal} \sbr{D_{KL}(Q(S_i', P) \| P)}\nonumber\\
    &+ C(\delta, \lambda, \beta, n, m_i), 
\end{align*}}
where $\tilde{\xi} =\frac{1}{\lambda} + \frac{1}{n\beta}$.
\begin{proof}
The "task environment generalization bound" is the same as the one in Theorem \ref{thm:pac-meta1-new}, because the base-learner in observed and target task have the same task environment $T$. Therefore, we have 
\begin{align}
    R(\Qcal, T) \le &\frac{1}{n}\sum_{i=1}^n\EE_{P \sim \Qcal} \sbr{L(Q(S'_i, P), D_i) } + \frac{1}{\lambda} \rbr{D_{KL}(\Qcal \| \Pcal) + \log\frac{1}{\delta}}+ \frac{n}{\lambda}\Psi_2(\frac{\lambda}{n}).
    \label{eq:task_environment3}
\end{align}

As for the "task-specific generalization bound", define,
\begin{align*}
    \hat{L}(\hb) &= \frac{1}{n}\sum_{i=1}^n \frac{1}{m_i} \sum_{j=1}^{m_i} l(h_i, z_{ij}), \;\;
    L(\hb) = \frac{1}{n}\sum_{i=1}^n \EE_{z_i \sim D_i} l(h_i, z_i),
\end{align*}
where $z_{ij} \in S_i$ which is sampled from $D_i$.
According to the Markov inequality, with at least $1-\delta$ probability, we have
\begin{align*}
    &\EE_{P \sim \Pcal}\EE_{\hb \sim P^n} \sbr{e^{n\beta(L(\hb)-\hat{L}(\hb))}} 
    \le \frac{1}{\delta} \EE_{P \sim \Pcal}\EE_{\hb \sim P^n} \EE_{\Sbb \sim \Db^{\mb}} \sbr{e^{n\beta (L(\hb) - \hat{L}(\hb))}}
\end{align*}
Now take the logarithm of both sides, and transform the expectation over $\Pcal, P$ to $\Qcal, Q$, where we use base-learner $Q(S_i', P)$ with $S'_i \in D_i^{m'_i}$. Then the LHS becomes
\begin{align*}
    &\log \EE_{P \sim \Pcal}\EE_{\hb \sim P^n} \sbr{e^{n\beta(L(\hb)-\hat{L}(\hb))}} \\
    =& \log \EE_{P \sim \Qcal}\EE_{\hb \sim \Qb(\Sbb', P)} [\frac{\Pcal(P)\prod_{i=1}^n P(h_i)}{\Qcal(P) \prod_{i=1}^n Q_i(h_i|S_i', P)} e^{n\beta(L(\hb)-\hat{L}(\hb))}] \\
    \ge& -D_{KL} (\Qcal \| \Pcal) - \sum_{i=1}^n \EE_{P \sim \Qcal} \sbr{D_{KL}(Q(S_i', P) \| P)} \\
    &\quad + \beta\EE_{P\sim \Qcal}[\sum_{i=1}^n L(Q(S_i', P), D_i)]-\beta\EE_{P\sim \Qcal}[\sum_{i=1}^n\hat{L}(Q(S_i', P), S_i)].
\end{align*}
The first equation uses the fact that the hyper-prior $\Pcal$ and hyper-posterior $\Qcal$ as well as the prior $P$ are shared across all $n$ observed tasks. 
The inequality uses Jensen's inequality to move the logarithm inside expectation.

The RHS is
\begin{align*}
    &\log\frac{1}{\delta} + \log\EE_{P \sim \Pcal} \EE_{\hb \sim P^n} \EE_{\Sbb \sim \Db^{\mb}} \sbr{e^{n\beta (L(h) - \hat{L}(h))}} \\
    =&\log\frac{1}{\delta} + \log\EE_{P \sim \Pcal}\EE_{\hb \sim P^n} \prod_{i=1}^n \prod_{j=1}^{m_i} \EE_{z_{ij} \sim D_i} \sbr{e^{\frac{\beta}{m_i} (\EE_{z_i \sim D_i}[l(h_i, z_i)] - l(h_i, z_{ij}))}}\\ 
    %\le& \log\frac{1}{\delta} + \log\EE_{P \sim \Pcal}\EE_{\hb \sim P^n} \prod_{i=1}^n \prod_{j=1}^{m_i} e^{\frac{\beta^2}{8m_i^2}}\\
    %=& \log\frac{1}{\delta} + \sum_{i=1}^n \frac{\beta^2}{8m_i}.
    =& \log\frac{1}{\delta} + \sum_{i=1}^n m_i \Psi_1(\frac{\beta}{m_i}).
\end{align*}
Now, combining the LHS and RHS together, we get that with at least $1-\delta$ probability, 
\begin{align}
    &\EE_{P\sim \Qcal}[\frac{1}{n}\sum_{i=1}^n L(Q(S'_i, P), D_i)] \nonumber\\ 
    \le& \EE_{P \sim \Qcal} \sbr{\frac{1}{n}\sum_{i=1}^n L(Q(S'_i, P), S_i)} + \frac{1}{n\beta} D_{KL} (\Qcal \| \Pcal) + \frac{1}{n\beta} \sum_{i=1}^n \EE_{P \sim \Qcal} \sbr{D_{KL}(Q(S'_i, P) \| P)}\nonumber\\
    &+ \frac{1}{n\beta} \log \frac{1}{\delta} + \frac{1}{n}\sum_{i=1}^n \frac{m_i}{\beta}\Psi_1(\frac{\beta}{m_i}). \label{eq:task_specific3}
\end{align}
Combining Eq.\eqref{eq:task_environment3} with Eq.\eqref{eq:task_specific3} immediately yields Eq.\eqref{eq:task_union3}.
\end{proof}

\section{Derivations of MAML and Reptile}
\label{sec:appendix-dirac}
In this section, we derive a couple of meta-learning algorithms based on the MAP estimation of PAC-Bayesian bounds. To this end, we assume that the distribution families of the hyper-posterior $\Qcal(P)$ and posterior $Q_i(h)$ are from delta functions. In addition, we use the isotrophic Gaussian priors for the hyper-prior $\Pcal(P)$ and the prior $P(h)$ on all model parameters,
\begin{align*}
    \Pcal(P) &= \Ncal(\pb | 0, \sigma_0^2) \\
    \Qcal(P) &= \delta(\pb = \pb_0) \\
    P(h) &= \Ncal(\hb | \pb, \sigma^2) \\
    Q_i(h) &= \delta(\hb = \qb_i) \;\; \forall i=1,\ldots, n.
\end{align*}
This way we have a closed form solution for the two KL terms, which are (up to a constant)
\begin{align*}
    D_{KL}(\Qcal \| \Pcal) =& \int d\pb \delta(\pb = \pb_0) \cdot \rbr{\frac{\|\pb\|^2}{2\sigma_0^2} + \frac{k}{2}\log(2 \pi \sigma_0^2) + \log \delta(\pb=\pb_0)} \\ =&\frac{\|\pb_0\|^2}{2\sigma_0^2} + \frac{k}{2}\log(2 \pi \sigma_0^2) + c,
\end{align*}
where $k$ is the dimension of $\pb$ and $c$ is a constant. Similarly,
\begin{align*}
    &\EE_{P \sim \Qcal}[D_{KL}(Q_i | P)] \\
    =& \int d\pb \delta(\pb = \pb_0) \int d\hb \delta(\hb = \qb_i) \cdot \rbr{\frac{\|\hb-\pb\|^2}{2 \sigma^2} + \frac{k}{2}\log(2 \pi \sigma^2) + \log \delta(\hb = \qb_i)} \\
    =& \int d\pb \delta(\pb = \pb_0) \cdot \frac{\|\pb-\qb_i\|^2}{2\sigma^2} + \frac{k}{2}\log(2 \pi \sigma^2) + c \\
    =& \frac{\|\pb_0-\qb_i\|^2}{2\sigma^2} + \frac{k}{2}\log(2 \pi \sigma^2) + c.
\end{align*}

Plugging in the above results, the PAC-Bayesian bound ($PacB$) in Eq.\eqref{eq:task_union2} and Eq.\eqref{eq:task_union3} are both of the form of,
\begin{align*}
&PacB 
%=&\EE_{P \sim \Qcal} \sbr{\frac{1}{n}\sum_{i=1}^n L(Q(S_i, P), S_i)} + \tilde{\xi} D_{KL}(\Qcal \| \Pcal) +\frac{1}{n\beta}\sum_{i=1}^n \EE_{P \sim \Qcal} \sbr{D_{KL}(Q(S_i, P) \| P)} \\
%    &\quad + \tilde{\xi} \log \frac{2}{\delta} + \frac{n}{\lambda}\Psi(\frac{\lambda}{n}) + \frac{1}{n}\sum_{i=1}^n \frac{m_i}{\beta} \Psi(\frac{\beta}{m_i}) \nonumber\\
=\frac{1}{n} \sum_{i=1}^n  L(\qb_i, S_i) + \frac{\tilde{\xi}\|\pb_0\|^2}{2\sigma_0^2} + \frac{1}{n\beta}\sum_{i=1}^n \frac{\|\pb_0-\qb_i\|^2}{2\sigma^2} + C', 
\end{align*}
where the constant $C'$ corresponding to Eq.\eqref{eq:task_union2} and Eq.\eqref{eq:task_union3} are different by $\Delta_\lambda$. The only free variable of $PacB$ is $\pb_0$. The base-learner $\qb_i$ can be any function of $\pb_0$ and $S_i$ for Eq.\eqref{eq:task_union2} or $S_i'$ for Eq.\eqref{eq:task_union3}. One could find the MAP estimation of $PacB$ by gradient descent with respect to $\pb_0$. 

Note that in Eq.\eqref{eq:task_union2}, for a given $\pb_0$ and $S_i$, there exists an optimal base-learner $\qb_i^*$ in the form of,
\begin{align*}
    \qb_i^* = \argmin_{\qb_i} (PacB) = \argmin_{\qb_i} \sbr{L(\qb_i, S_i) + \frac{\|\pb_0-\qb_i\|^2}{2\beta\sigma^2}}.
\end{align*}

Given the optimal $\qb_i^*$, the full derivative of $PacB$ with respect to $\pb_0$ is substantially simpler,
\begin{align}
\frac{d (PacB)}{d \pb_0} &= \frac{\partial (PacB)}{\partial \pb_0} + \inner{\frac{\partial \qb_i^*}{\partial \pb_0}} {\frac{\partial (PacB)}{\partial \qb_i^*}} \nonumber\\
&= \frac{\partial (PacB)}{\partial \pb_0} = \frac{\tilde{\xi}\pb_0}{ \sigma_0^2} + \frac{1}{n}\sum_{i=1}^n \frac{\pb_0-\qb_i^*}{\beta\sigma^2}, \label{eq:grad_pac_delta1}
\end{align}
where the 2nd equation is because $\frac{\partial (PacB)}{\partial \qb_i^*} = 0$ for the optimal base-leaner $\qb_i^*$. Eq.\eqref{eq:grad_pac_delta1} is the equivalent to the meta-update of the Reptile algorithm \citep{nichol1803first}, except that Reptile does not solve for the optimal base learner $\qb_i^*$.

From the optimal condition, the base-learner $\qb_i^*$ satisfies,
\begin{align*}
    \frac{\pb_0 - \qb_i^*}{\beta \sigma^2} = \nabla_{\qb_i^*} L(\qb_i^*, S_i). 
\end{align*}
Therefore, we can rewrite Eq.\eqref{eq:grad_pac_delta1} in the form of the \emph{implicit gradient},
\begin{align*}
   \frac{d (PacB)}{d \pb_0} = \frac{\tilde{\xi}\pb_0}{ \sigma_0^2} + \frac{1}{n}\sum_{i=1}^n \nabla_{\qb^*_i} L(\qb_i^*, S_i).
\end{align*}

In contrast, the standard multi-task objective uses the \emph{explicit gradient}, where $\qb_i=\pb_0$ and
\begin{align*}
   \frac{d (PacB)}{d \pb_0} =  \frac{\tilde{\xi}\pb_0}{ \sigma_0^2} + \frac{1}{n}\sum_{i=1}^n \nabla_{\pb_0} L(\pb_0, S_i).
\end{align*}

\section{Derivations of PACMAML}
For Theorem \ref{thm:pac-meta3}, we use the following posterior as the base-learner for observed task $\tau_i$, 
\begin{align*}
    Q_i(S_i', P)(h) = \frac{P(h)\exp(-\alpha \hat{L}(h, S_i'))}{Z_\alpha(S_i', P)}. 
\end{align*}
Plugging this $Q_i$ into Eq.\eqref{eq:task_union3}, we have
\begin{align*}
&R(\Qcal, T) \\
\le&  \EE_{P \sim \Qcal} \sbr{\frac{1}{n}\sum_{i=1}^n \hat{L}(Q_i, S_i)} + \tilde{\xi} D_{KL}(\Qcal \| \Pcal) +\frac{1}{n\beta}\sum_{i=1}^n \EE_{P \sim \Qcal} \sbr{D_{KL}(Q_i\| P)} + C  \\
    =& \frac{1}{n}\sum_{i=1}^n  \EE_{P \sim \Qcal} \sbr{\hat{L}(Q_i, S_i) + \frac{1}{\beta} D_{KL}(Q_i \| P)} + \tilde{\xi} D_{KL}(\Qcal \| \Pcal) + C \\  
    =& \frac{1}{n}\sum_{i=1}^n  \EE_{P \sim \Qcal} \EE_{h \sim Q_i} \sbr{ \hat{L}(h, S_i) + \frac{1}{\beta} \log Q_i(h) - \frac{1}{\beta} \log P(h)} + \tilde{\xi} D_{KL}(\Qcal \| \Pcal) + C \\      
    =& \frac{1}{n}\sum_{i=1}^n  \EE_{P \sim \Qcal} \EE_{h \sim Q_i} \sbr{ \hat{L}(h, S_i) - \frac{\alpha}{\beta} \hat{L}(h, S_i') - \frac{1}{\beta} \log Z_\alpha(S_i', P))} + \tilde{\xi} D_{KL}(\Qcal \| \Pcal) + C \\    
    = &\frac{1}{n}\sum_{i=1}^n\EE_{P \sim \Qcal} [-\frac{1}{\beta}  \log Z_\alpha(S_i', P) + \hat{L}(Q_i, S_i) - \frac{\alpha}{\beta}\hat{L}(Q_i, S_i')]
    + \tilde{\xi} D_{KL}(\Qcal \| \Pcal) + C.
\end{align*}
where $C = \tilde{\xi} \log \frac{2}{\delta} + \frac{n}{\lambda}\Psi(\frac{\lambda}{n}) + \frac{1}{n}\sum_{i=1}^n \frac{m_i}{\beta} \Psi(\frac{\beta}{m_i})$.

\subsection{The Gradient Estimator of PACOH and PACMAML}
Assuming that the model hypothesis $h$ is parameterized by $\vb$ such that $\hat{L}(h, S_i) \triangleq \hat{L}(\vb, S_i)$, and $\vb$ has prior $P(\vb) = \Ncal(\vb| \pb, \sigma^2)$ with meta-parameter $\pb$, then
\begin{align*}
\log Z_\beta(S_i, \pb) = \log \int \Ncal(\vb| \pb, \sigma^2)\exp(-\beta \hat{L}(\vb, S_i)) d\vb. %\label{eq:inst-3'}
\end{align*}
Note that the parameter $\pb$ appears in the probability distribution of the expectation, and the naive Monte-Carlo gradient estimator of such gradient is known to exhibit high variance. To reduce the variance, we apply the reparameterization trick~\citep{kingma2013auto} and rewrite $\vb = \pb + \wb$ with $\wb \sim \Ncal(\wb| {\bf 0}, \sigma^2)$, then
\begin{align*}
\log Z_\beta(S_i, \pb) = \log \int \Ncal(\wb| {\bf 0}, \sigma^2)\exp(-\beta \hat{L}(\pb + \wb, S_i)) d\wb. %\label{eq:inst-3'}
\end{align*}
This leads to the gradient of $W_1$ in the following form,
\begin{align*}
\frac{d}{d \pb} W_1 &= -\frac{1}{\beta}\frac{d}{d \pb}\log Z_\beta(S_i, \pb) = \int Q^{\beta}_i(\wb; S_i) \frac{\partial \hat{L}(\pb + \wb, S_i)}{\partial \pb} d\wb, \\
&\text{where,} \;\; Q^{\beta}_i(\wb; S_i) \propto \Ncal(\wb| {\bf 0}, \sigma^2)\exp(-\beta \hat{L}(\pb + \wb, S_i)).\nonumber
\end{align*}
As for $W_2$, the first term is simlar to $W_1$, but we also need to evaluate the gradient of $\hat{L}^{\Delta}_{\frac{\alpha}{\beta}}(Q^{\alpha}_i, S_i, S_i')$, which is
{\small  \begin{align}
  \frac{d}{d \pb}\hat{L}^{\Delta}_{\frac{\alpha}{\beta}}(Q^{\alpha}_i, S_i, S_i')
    &= \int  Q_i^{\alpha}(\wb; S_i') \frac{\partial \hat{L}^{\Delta}_{\frac{\alpha}{\beta}}(\pb+\wb, S_i, S_i')}{\partial \pb} d\wb + \int \frac{\partial Q_i^{\alpha}(\wb; S'_i)}{\partial \pb}\hat{L}^{\Delta}_{\frac{\alpha}{\beta}}(\pb+\wb, S_i, S_i') d\wb. \label{eq:grad_Ldelta}
\end{align}}
The second term of Eq.\eqref{eq:grad_Ldelta} is equivalent to,
\begin{align*}
    &\int \frac{\partial Q_i^{\alpha}(\wb;S_i')}{\partial \pb} \hat{L}^{\Delta}_{\frac{\alpha}{\beta}}(\pb+\wb, S_i, S_i') d\wb \\
    =& -\frac{1}{\beta}\frac{\partial}{\partial \pb}\int Q_i^{\alpha}(\wb;S_i') \text{stop\_grad}\rbr{-\beta\hat{L}^{\Delta}_{\frac{\alpha}{\beta}}(\pb+\wb, S_i, S_i')} d\wb.
\end{align*}
The Monte-Carlo gradient estimator of this has the same high-variance problem as in the policy gradient method, which causes unreliable inference without warm-start. Instead, we apply the cold-start policy gradient method by approximating the loss with the one from the softmax value function~\citep{ding2017cold} as follows,
\begin{align*}
&-\frac{1}{\beta}\int Q_i^{\alpha}(\wb;S_i') \text{stop\_grad}\rbr{-\beta\hat{L}^{\Delta}_{\frac{\alpha}{\beta}}(\pb+\wb, S_i, S_i')} d\wb \\
\ge& -\frac{1}{\beta}\log \int Q_i^{\alpha}(\wb;S_i') \exp\rbr{\text{stop\_grad}\rbr{-\beta\hat{L}^{\Delta}_{\frac{\alpha}{\beta}}(\pb+\wb, S_i, S_i')}} d\wb.
\end{align*}
Then we take the gradient of the softmax value function,
\begin{align*}
    &-\frac{1}{\beta}\frac{\partial}{\partial \pb} \log \int Q_i^{\alpha}(\wb;S_i') \exp\rbr{\text{stop\_grad}\rbr{-\beta\hat{L}^{\Delta}_{\frac{\alpha}{\beta}}(\pb+\wb, S_i, S_i')}} d\wb \\
    =& -\frac{1}{\beta} \frac{\int  \frac{\partial Q_i^{\alpha}(\wb;S_i')}{\partial \pb}  \exp\rbr{\text{stop\_grad}\rbr{-\beta\hat{L}^{\Delta}_{\frac{\alpha}{\beta}}(\pb+\wb, S_i, S_i')}} d\wb}{\int Q_i^{\alpha}(\wb;S_i') \exp\rbr{\text{stop\_grad}\rbr{-\beta\hat{L}^{\Delta}_{\frac{\alpha}{\beta}}(\pb+\wb, S_i, S_i')}} d\wb}  \\
    =& -\frac{1}{\beta} \frac{\int  \frac{\partial \log Q_i^{\alpha}(\wb;S_i')}{\partial \pb} Q_i^{\alpha}(\wb;S_i') \exp\rbr{\text{stop\_grad}\rbr{-\beta\hat{L}^{\Delta}_{\frac{\alpha}{\beta}}(\pb+\wb, S_i, S_i')}} d\wb}{\int Q_i^{\alpha}(\wb;S_i') \exp\rbr{\text{stop\_grad}\rbr{-\beta\hat{L}^{\Delta}_{\frac{\alpha}{\beta}}(\pb+\wb, S_i, S_i')}} d\wb}  \\
    =& -\frac{1}{\beta} \frac{\int  \frac{\partial \log Q_i^{\alpha}(\wb;S_i')}{\partial \pb} \Ncal(\wb| {\bf 0}, \sigma^2)\exp(-\beta \hat{L}(\pb + \wb, S_i)) d\wb}{\int \Ncal(\wb| {\bf 0}, \sigma^2)\exp(-\beta \hat{L}(\pb + \wb, S_i)) d\wb}  \\
    =& -\frac{1}{\beta} \int Q_i^{\beta}(\wb;S_i) \frac{\partial \log Q_i^{\alpha}(\wb;S_i')}{\partial \pb} d \wb\\
    =& \frac{\alpha}{\beta} \int \rbr{Q_i^{\beta}(\wb;S_i) - Q_i^{\alpha}(\wb;S_i')} \frac{\partial \hat{L}(\pb + \wb, S'_i)}{\partial \pb} d\wb.
\end{align*}
This yields the overall gradient of $W_2$ to be,
\begin{align*}
    \frac{d}{d \pb} W_2 \simeq& \frac{\alpha}{\beta}\int Q^{\alpha}_i(\wb; S'_i) \frac{\partial \hat{L}(\pb + \wb, S'_i)}{\partial \pb} d\wb + \int Q_i^{\alpha}(\wb;S_i') \frac{\partial \hat{L}^{\Delta}_{\frac{\alpha}{\beta}}(\pb+\wb, S_i, S_i')}{\partial \pb} d\wb \\
    & \quad + \frac{\alpha}{\beta} \int \rbr{Q_i^{\beta}(\wb;S_i) - Q_i^{\alpha}(\wb;S_i')} \frac{\partial \hat{L}(\pb + \wb, S'_i)}{\partial \pb} d\wb \\
    =&\frac{\alpha}{\beta} \int Q_i^{\beta}(\wb;S_i) \frac{\partial \hat{L}(\pb + \wb; S_i')}{\partial \pb} d\wb + \int Q_i^{\alpha}(\wb;S_i') \frac{\partial \hat{L}^{\Delta}_{\frac{\alpha}{\beta}}(\pb+\wb, S_i, S_i')}{\partial \pb} d\wb \\
    =& \int Q_i^{\alpha}(\wb;S_i') \frac{\partial \hat{L}(\pb + \wb; S_i)}{\partial \pb} d\wb + \frac{\alpha}{\beta} \int \rbr{Q_i^{\beta}(\wb;S_i) - Q_i^{\alpha}(\wb;S_i')} \frac{\partial \hat{L}(\pb + \wb; S_i')}{\partial \pb} d\wb.
\end{align*}

The Pseudocode of PACMAML is shown in Algorithm 1.
\begin{algorithm}
\begin{algorithmic}
\State Input: $\sigma$, $\eta$, $\lambda$, $\alpha$, $\beta$, $N$, $K$.
\State Initialize: $\pb_0$.
\For{$i = 0, \ldots , N-1$}
\State $\wb^\alpha_{i, 0}=0, \wb^\beta_{i, 0}=0$
\For{$k = 0, \ldots, K-1$}
\State $\wb^\alpha_{i, k+1} = \wb^{\alpha}_{i, k} - \eta \rbr{\log \Ncal(\wb_{i, k}| 0, \sigma^2) - \beta \hat{L}(\pb_i + \wb_{i, k}, S'_i)}$
\State $\wb^\beta_{i,k+1} = \wb^{\beta}_{i, k} - \eta \rbr{\log \Ncal(\wb_{i, k}| 0, \sigma^2) - \alpha \hat{L}(\pb_i + \wb_{i, k}, S_i)}$
\EndFor
\State $\pb_{i+1} = \pb_i - \lambda \nabla_{p} \rbr{\hat{L}(\pb_i + \wb^{\alpha}_{i, K}, S_i) - \frac{\alpha}{\beta}\hat{L}(\pb_i + \wb^{\alpha}_{i, K}, S'_i) + \frac{\alpha}{\beta} \hat{L}(\pb_i + \wb^{\beta}_{i, K}, S_i)}$
\EndFor
\State Output: $\pb_N$.
\end{algorithmic}
\label{alg:pacmaml-code}
\caption{Pseudocode of PACMAML with approximate gradient estimation. Every posterior is approximated by 1 sample of SVGD, which reduces to SGD. For notation simplicity, we also assume both inner and outer loop uses a gradient decent with fixed learning rate. }
\end{algorithm}

\section{Experiment Details of the Regression Problem}
\label{sec:r_detail}
\subsection{Gaussian Process Model Details}
We use the Gaussian process prior, where $P_\theta(h) = \mathcal{GP}(h|m_\theta(x),k_\theta(x,x'))$ and $k_{\theta}(x, x') = \frac{1}{2}\exp{(-\|\phi_\theta(x)-\phi_\theta(x')\|^2})$. Both $m_\theta(x)$ and $\phi_\theta(x)$ are instantiated to be neural networks. The networks are composed of an input layer of size $1 \times 32$, a hidden layer of size $32 \times 32$. $m_\theta$ and $\phi_\theta$ has an output layer of size $32 \times 1$ and $32\times 2$, respectively. 

We focused on regression problems where for every example $z_j = (x_j, y_j)$ and a hypothesis $h$, the $l_2$-loss function is used so that $l(h, z_j) = \| h(x_j) - y_j \|_2^2$. This leads to a Gaussian likelihood function. Assuming there are $m$ examples in the dataset, we have
\begin{align*}
    P(y | h, x) =& \Ncal(h, \frac{m}{2\alpha} I)\\
    =& \frac{1}{(\pi m / \alpha)^{m/2}} \exp\rbr{-\frac{\alpha}{m} \sum_{j=1}^m (h(x_j) - y_j)^2}.
\end{align*}
As a result, the partition function $Z_{\alpha}(S, P)$ is,
\begin{align*}
    Z_{\alpha}(S, P) &= (\pi m / \alpha)^{m/2} \int_h dh P(y| h, x) P_\theta(h) \\
    &= (\pi m / \alpha)^{m/2} \Ncal(y | m_{\theta}(x), k_\theta(x,x') + \frac{m}{2\alpha} I),
\end{align*}

We apply the GP base-learner $Q$ on the the observed data $S_i$ of task $\tau_i$. For notation simplicity, let us denote $Q_i(h^i | S_i, P) = \mathcal{N}(\mu_i, K_i)$, where $h^i$ denotes the model hypothesis (predictions) of the $m_i$ examples in $S_i$. Then we have,
\begin{align*}
    \hat{L}(Q_i, S_i)
    =& \frac{1}{m_i}\int Q_i(h^i) (y^i - h^i)^\top (y^i - h^i) d h^i \\
    =& \frac{1}{m_i} \rbr{y^{i\top} y^i - 2 \mu_i^\top y^{i} + \mu_i^{\top} \mu_i + \text{tr}(K_i)},
\end{align*}
where $y^i$ denotes the labels of the $m_i$ examples in $S_i$.

The hyper-prior $\Pcal(P_\theta) := \Pcal(\theta) = \Ncal(\theta|0, \sigma_0^2I)$ is an isotropic Gaussian defined over the network parameters $\theta$, where we take $\sigma_0^2 = 3$ in our numerical experiments. The MAP approximated hyper-posterior takes the form of a delta function, where $\Qcal_{\theta_0}(P_\theta) := \Qcal_{\theta_0}(\theta)=\delta(\theta=\theta_0)$. As a result, we have 
\begin{align*}
    &D_{KL}(\Qcal_{\theta_0} \| \Pcal) \\
    =& \int d\theta \delta(\theta=\theta_0) \rbr{\frac{\|\theta\|^2}{2\sigma_0^2} + \frac{k}{2}\log(2 \pi \sigma_0^2) + \log \delta(\theta=\theta_0)} \\ 
    =&\frac{\|\theta_0\|^2}{2\sigma_0^2} + \frac{k}{2}\log(2 \pi \sigma_0^2) + c,
\end{align*}
which combined with $\tilde{\xi}$ becomes the regularizer on the parameters $\theta_0$.

\subsection{Experiment Details}
\label{appsubsec:details}

In the Sinusoid experiment, the number of available examples per observed task $m_i \in \cbr{5, 10, 30, 50, 100}$. Under the setting of PACOH (Theorem \ref{thm:pac-meta2}), for each different $m_i$, we did a grid search on $\beta/m_i \in \cbr{10,30,100}$. Under the setting of PACMAML (Theorem \ref{thm:pac-meta3}), for each different $m_i$, we did a grid search on $\beta/m_i \in \cbr{10,30,100}$ and $\alpha/\beta \in \cbr{0.1,0.2,0.3,0.4,0.5,0.6}$. We use a subsect $S_i'\subset S_i$ with $m_i'=m$ to train the base-learner in PACMAML. For each hyperparameter setting $\beta$ (and $\alpha$), we trained 40 models. Each model is trained on 1 of the 8 pre-sampled meta-training sets (each containing $n=20$ observed tasks) and each set is run with 5 random seeds of network initialization. The ultimate result for each $\beta$ (and $\alpha$) is the averaged result across all models of that setting. The hyperparameters $\tilde{\xi}$ and $\sigma_0^2$ in the hyper-prior ($\Pcal(\theta) = \Ncal(\theta|0, \sigma_0^2I)$) are chosen to be $\tilde{\xi}=1/(n\beta)$ and $\sigma_0^2=3$. To find the optimal model parameter $\theta_0$, we used the ADAM optimizer with learning rate $3 \times 10^{-3}$. The number of tasks per batch is fixed to 5 across all experiments. We run 8000 iterations for each experiment. 

% \paragraph{SwissFEL}
% The number of available examples per observed task $m_i \in \cbr{10, 20, 50, 100, 200}$. For each hyperparameter setting $\beta$ (and $\alpha$), we trained 5 models with different random seeds of network initialization.

The experiments ran in parallel on several 56-core Intel CLX processors and each experiment runs on a single core. Each iteration in the PACOH and PACMAML setting takes about 0.03-0.06s and 0.07-0.14s to run, respectively, with the exact run-time varying for different number of tasks $n$ and number of examples $m_i$.
\subsection{Additional Results}
\label{appsubsec:additional-result}
We performed the 4-fold cross validation over the 20 target tasks to determine the optimal $\beta$ for PACOH (Theorem \ref{thm:pac-meta2}) or the optimal $\alpha$ and $\beta$ for PACMAML (Theorem \ref{thm:pac-meta3}). For the selected $\alpha$ and $\beta$ form validation, we report the lowest test error the corresponding models can achieve. The results are plotted in Figure \ref{fig:sinusoid-crossval}. For each setting, both the validation and test errors show the same trend, where the error with PACOH setting saturates earlier than that with PACMAML setting.

\begin{figure}[!h]
    \centering
    \includegraphics[width=0.33\textwidth]{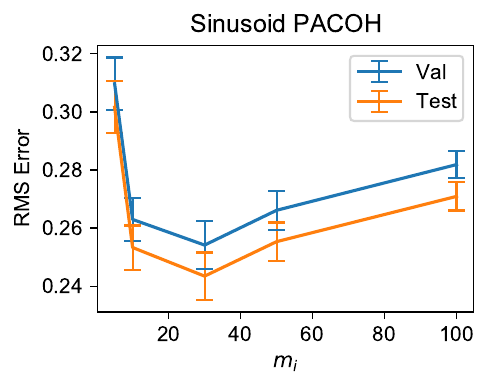}
    \includegraphics[width=0.33\textwidth]{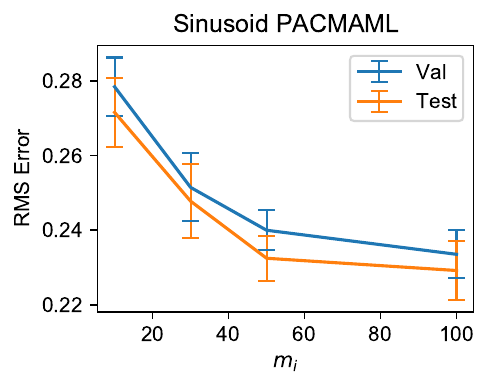}
    \caption{The validation and test error (error bars corresponding to standard errors) on the Sinusoid dataset under the settings of PACOH and PACMAML.}
    \label{fig:sinusoid-crossval}
\end{figure}

\begin{figure}[!h]
    \centering
    \includegraphics[width=0.32\textwidth]{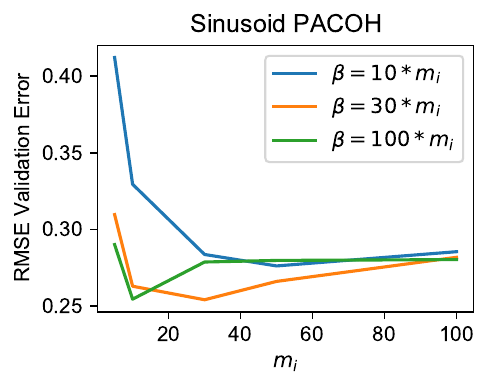}
    \includegraphics[width=0.32\textwidth]{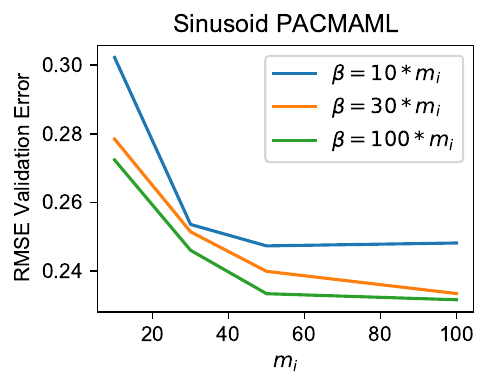}
    \includegraphics[width=0.32\textwidth]{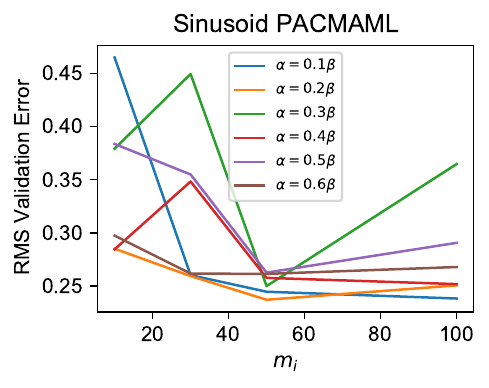}
    \caption{Left: $\beta$-dependence of the RMSE validation error under the PACOH (Theorem \ref{thm:pac-meta2}) setting. Middle and Right: $\beta$- and $\alpha$-dependence of the RMSE validation error under the PACMAML (Theorem \ref{thm:pac-meta3}) setting. $\alpha$ is chosen as the optimal $\alpha$ in the middle plot. $\beta=30*m_i$ in the right plot.}
    \label{fig:parameter-dependence}
\end{figure}

\begin{table}[!h]
\centering
  \begin{tabular}{l | l l l l l l }
    $m_i$  & 5 & 10 & 30 & 50 & 100\\\hline
    $\beta/m_i$ & 100 & 100 & 30 & 30 & 100\\
    % SwissFEL & - & 50 & 200 & 250 & 500 & 1000 
  \end{tabular}
  \caption{Optimal $\beta$ under the setting of PACOH, based on the results of a 4-fold cross validation.}
  \label{tab:thm7-beta}
 \end{table}
 
\begin{table}[!h]
\centering
  \begin{tabular}{l | l l l l l }
    $m_i$ & 10 & 30 & 50 & 100 \\\hline
     $\alpha/\beta$ &  0.2 &  0.2 &  0.2 &  0.1 \\
     $\beta/m_i$ &  100 &  100 &  100 &  100 
  \end{tabular}
  \caption{Optimal $\alpha $ and $ \beta$ values under the setting of PACMAML, based on the results of a 4-fold cross validation.}
  \label{tab:thm9-alpha-beta}
 \end{table}
 
In Table \ref{tab:thm7-beta} and Table \ref{tab:thm9-alpha-beta}, we provide the optimal $\beta$ (and $\alpha$) for PACOH and PACMAML, respectively. In Fig.~\ref{fig:parameter-dependence}, we plotted the validation error for three different values of $\beta$ we used. We see that for both PACOH and PACMAML, the error is large for a small $\beta/m_i=10$. The error with $\beta/m_i=30$ and $\beta/m_i=100$ are similar for PACOH. For PACMAML, the error with $\beta/m_i = 100$ is slightly and consistently better than the error with $\beta/m_i=30$. From the right figure of Fig.~\ref{fig:parameter-dependence} we see that for PACMAML, given $\beta/m_i=30$, $\alpha/\beta$ around $0.2$ achieves lowest validation error.

\subsection{Generalization Bound of PACMAML}
\label{appsubsec:bound}

When $\beta/m_i$ is held as a constant, the $\Psi_1$ and $\Psi_2$ terms of $C(\delta, \lambda, \beta, n, m_i)$ in Eq.\eqref{eq:c} becomes the same across all $m_i$ and both PACOH (Eq. \eqref{eq:inst-3}) and PACMAML (Eq. \eqref{eq:inst-5}). Thus, we exclulde the $\Psi_1$ and $\Psi_2$ terms when comparing the bound values for different $m_i$ and different setups PACOH and PACMAML. In Fig. \ref{fig:pacmaml1-bound} and \ref{fig:pacmaml2-bound} we show the value of each term and the total bound for PACOH and PACMAML obtained from the same set of experiments for Fig. \ref{fig:sinusoid}-\ref{fig:parameter-dependence}. For both PACOH and PACMAML, all three terms $W$, $\tilde{\xi}D_{KL}$ and $\tilde{\xi}\log(1/\delta)$ tend to decrease with larger $m_i$. For PACOH, with the extra term $\Delta_\lambda$ that panalizes larger $m_i$, the total bound either always increases with $m_i$ or first increases then saturates. For PACMAML, without the $\Delta_\lambda$ term, the total bound $W_2+\tilde{\xi}D_{KL}+\tilde{\xi}\log(1/\delta)$ monotonically decreases vs. $m_i$. %Note that for the Sinusoid experiment, we found that $R(\Qcal, T) - R(\Qcal, \tilde{T})$ is lower than $\Delta_\lambda(\Pcal, T, \tilde{T})$. Thus, we write $\Delta_\lambda$ as $\Delta$ here without the $\lambda$ dependence.

In Fig. \ref{fig:bound-comparison}, we show the comparison between the total bound of PACOH and PACMAML. We see that for all $m_i>5$, PACMAML has lower bound for all choices of $\beta$. 

\begin{figure}[!h]
    \centering
    \includegraphics[width=0.32\textwidth]{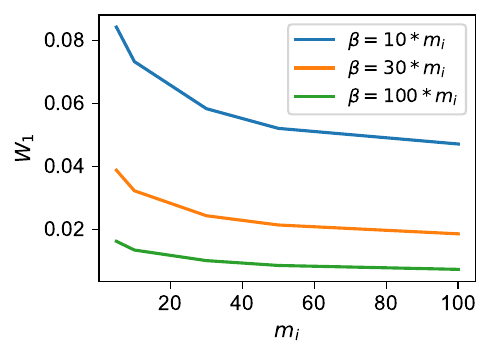}
    \includegraphics[width=0.33\textwidth]{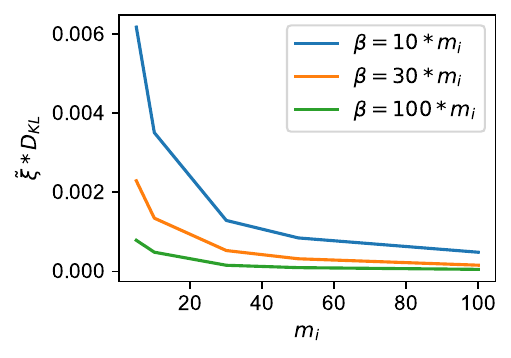}
    \includegraphics[width=0.32\textwidth]{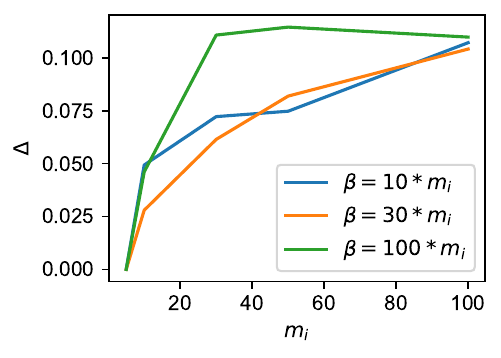}
    \includegraphics[width=0.34\textwidth]{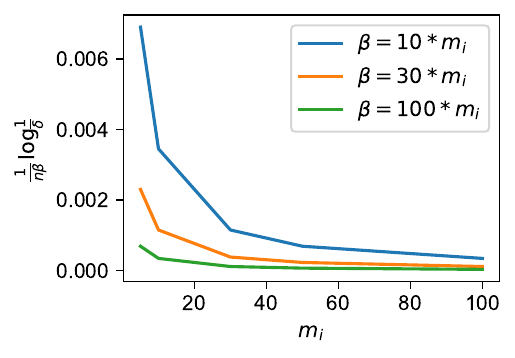}
    \includegraphics[width=0.32\textwidth]{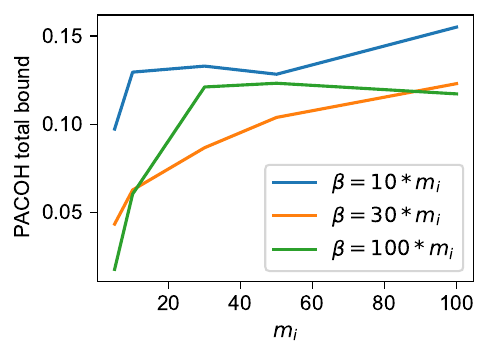}
    \caption{Values of $W_1$, $\tilde{\xi}D_{KL}$ and $\Delta$ terms in the PACOH bound and the total value of the bound for $\beta/m_i\in\cbr{10,30,100}$.}
    \label{fig:pacmaml1-bound}
\end{figure}

\begin{figure}[!h]
    \centering
    \includegraphics[width=0.42\textwidth]{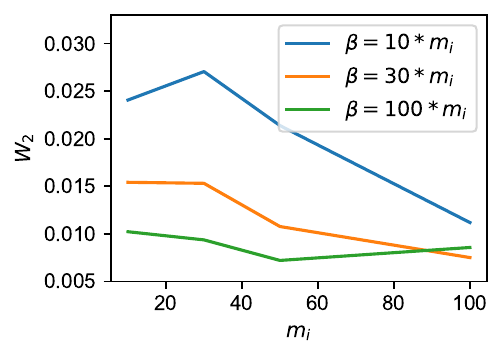}
    \includegraphics[width=0.43\textwidth]{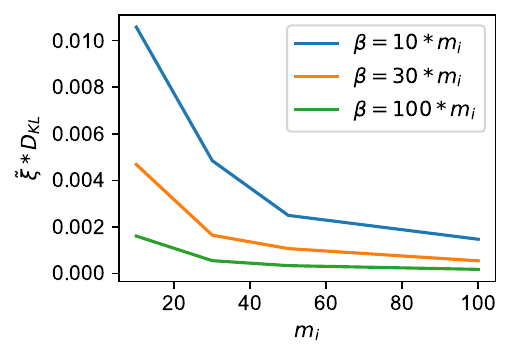}
    \includegraphics[width=0.44\textwidth]{pacmaml1_bound_logdelta.pdf}
    \includegraphics[width=0.43\textwidth]{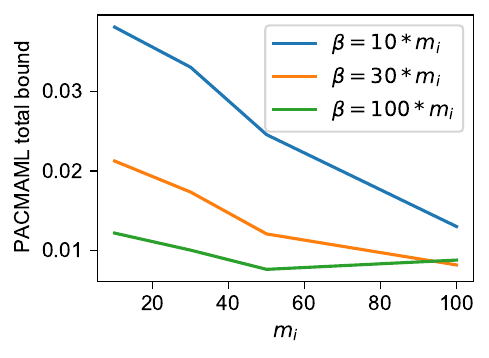}
    \caption{Values of $W_2$ and  $\tilde{\xi}D_{KL}$ terms in the PACMAML bound and the total value of the bound for $\beta/m_i\in\cbr{10,30,100}$. $\alpha$ for each $m_i$ is set to the optimal value according to Fig. \ref{fig:parameter-dependence}.}
    \label{fig:pacmaml2-bound}
\end{figure}

\begin{figure}[!h]
    \centering
    \includegraphics[width=0.44\textwidth]{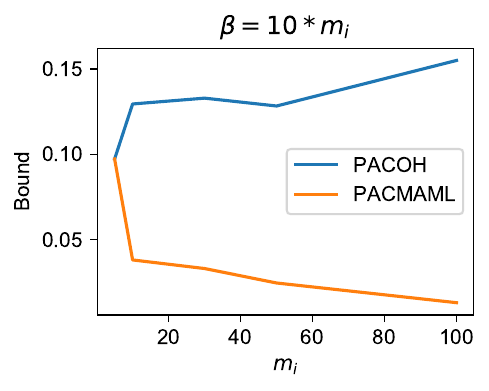}
    \includegraphics[width=0.44\textwidth]{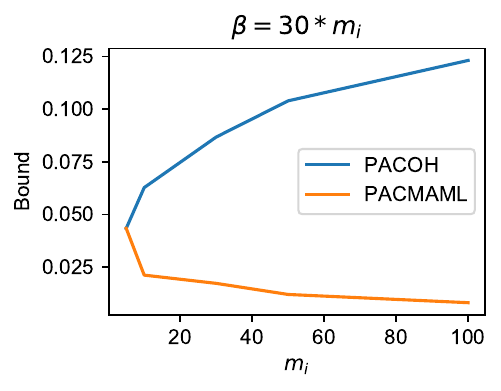}
    \includegraphics[width=0.44\textwidth]{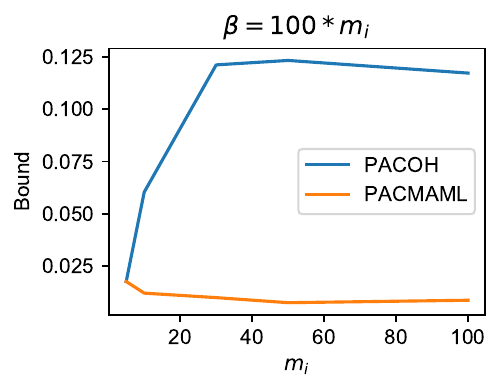}
    \caption{Comparison of the values of PACOH and PACMAML bound for $\beta/m_i\in\cbr{10,30,100}$. $\alpha$ for each $m_i$ for PACMAML is set to the optimal value according to Fig. \ref{fig:parameter-dependence}.}
    \label{fig:bound-comparison}
\end{figure}

\subsection{Experiment for Reptile and MAML}

We also experimented with meta-learning algorithms that use Dirac-measure base-learners, by implementing the Reptile (with optimal $\qb^*$) and the MAML algorithms following the equations of Section \ref{sec:relation}. 

Reptile follows the same experiment setting as PACOH. MAML follows the same experiment setting as PACMAML where $S_i'\subset S_i$, $m_i'=m$. In order to compute the optimal $\qb_i^*$ for Reptile, we use an L-BFGS optimizer in the inner loop with \texttt{lr = 5e-3, history\_size = 10, max\_iter =10}. Other experiment setting and hyperparameter selection procedure are the same as those in Section \ref{appsubsec:additional-result}.

The results of the 4-fold cross validation are plotted in Fig. \ref{fig:maml}. 
The errors of Reptile and MAML follow a very similar trend to the ones with non-Dirac measure base-learners under PACOH and PACMAML setting, respectively (Fig. \ref{fig:sinusoid-crossval}). However, the models with non-Dirac measure base-learners appear to have lower generalization errors than the ones with Dirac measure base-learners (i.e. Reptile and MAML). 

\begin{figure}[!h]
    \centering
    \includegraphics[width=0.33\textwidth]{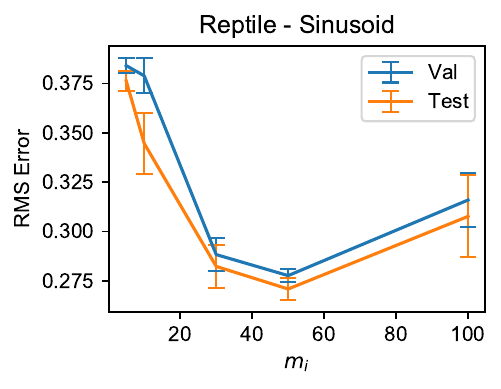}
    \includegraphics[width=0.33\textwidth]{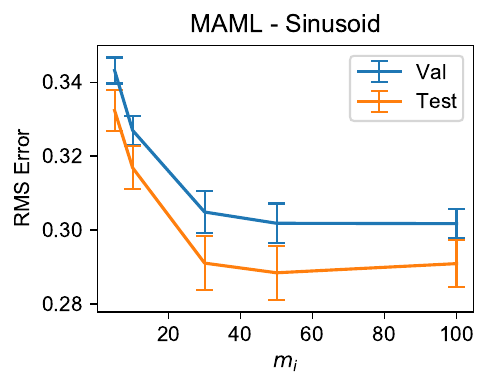}
    \caption{Mean and standard error of the validation   and the test result for Reptile and MAML on Sinusoid. The results are obtained from cross-validation. The error bars in the figures represent the standard errors.}
    \label{fig:maml}
\end{figure}

\section{Experiment Details of Image Classification}
\label{sec:ic_detail}
For most hyperparameters, we followed the same default values as in \citep{finn2017model}. In Table \ref{tab:ic_hyper}, we listed the hyperparameters that we did grid search, and their chosen value based on the meta-validation performance. For the inner learning rate, the search space was $\cbr{0.1, 0.03, 0.001, 0.003}$ for FOMAML, MAML, and PACMAML; the search space was $\cbr{0.1, 0.03, 0.001, 0.003, 0.001, 0.0003, 0.0001}$ for BMAML and PACOH. For the meta-learning rate, we used the default 0.001 for FOMAML, MAML and PACMAML; and searched over $\cbr{0.001, 0.0003, 0.0001, 0.00003}$ for BMAML and PACOH. For $\alpha$, we searched over $\cbr{10, 1.0, 0.1}$ for BMAML, PACOH, PACMAML. We also tried two gradient descent methods in the inner loop: Vanilla GD and ADAGRAD . We found that FOMAML and MAML worked better with Vanilla GD; while BMAML, PACOH and PACMAML worked better with ADAGRAD. $\sigma^2$ was fixed to 1 for PACOH and PACMAML. The number of task per batch was 4 and the network filter size was 64. The total number of meta-training iterations was 60000 for all algorithms. We ran these tasks with 1 NVIDIA P100 GPU per job and each job takes about 2-3 hours to finish.
\begin{table}[!h]
\centering
  \begin{tabular}{c | c | c c c c c}
    $m_i$ & Hyper-parameter & FOMAML & MAML & BMAML & PACOH & PACMAML\\\hline
    %& outer learning rate & - & - & - & 0.0001 & - \\
    %5 & inner learning rate & - & - & - & 0.01 & - \\
    %& $\alpha$ & - & - & - & 10 & - \\ \hline    
    & outer learning rate & 0.001 & 0.001 & 0.0001 & 0.0001 & 0.001 \\
    10 & inner learning rate & 0.1 & 0.1 & 0.003 & 0.01 & 0.03 \\
    & $\alpha$ & - & - & 1.0 & 10 & 1.0 \\ \hline
    & outer learning rate & 0.001 & 0.001 & 0.0001 & 0.0001 & 0.001 \\
    20 & inner learning rate & 0.03 & 0.1 & 0.003 & 0.003 & 0.01 \\
    & $\alpha$ & - & - & 0.1 & 1.0 & 1.0 \\ \hline
    & outer learning rate & 0.001 & 0.001 & 0.0001 & 0.0001 & 0.001 \\
    40 & inner learning rate & 0.03 & 0.03 & 0.003 & 0.003 & 0.01 \\
    & $\alpha$ & - & - & 0.1 & 1.0 & 10 \\ \hline
    & outer learning rate & 0.001 & 0.001 & 0.0001 & 0.0001 & 0.001 \\
    80 & inner learning rate & 0.03 & 0.03 & 0.0003 & 0.003 & 0.01 \\
    & $\alpha$ & - & - & 0.1 & 1.0 & 1.0 \\ \hline
  \end{tabular}
  \caption{The final hyper-parameters of the algorithms in the Mini-imagenet task.}
  \label{tab:ic_hyper}
 \end{table}

\section{Experiment Details of Natural Language Inference}
\label{sec:nli_detail}
We fixed $\sigma^2=0.0004$, which equals to the variance of the BERT parameter initialization. The hyper-parameter $\alpha$ is decided by a grid search over $\cbr{10^2, 10^3,10^4,10^5,10^6,10^7}$ based on the performance on the meta-validation dataset. The inner loop learning rate is $0.001$ for all algorithms. We used 50-step Adagrad optimizer in the inner-loop because it has automatic adaptive learning rate for individual variables which is beneficial for training large models. For the outer-loop optimization, we used the ADAM optimizer with learning rate $10^{-5}$. The final hyperparameters are reported in Table \ref{tab:nli_hyper}. In the few-shot learning phase, we ran the ADAM optimizer for 200 steps with learning rate $10^{-5}$ on the adaptable layers. We ran the tasks with 16 TPUs(v2) per job.

\begin{table}[!h]
\centering
  \begin{tabular}{c | c c c c}
    Hyper-parameter & MAML & BMAML & PACOH & PACMAML\\\hline
    inner learning rate & 0.001 & 0.001 & 0.001 & 0.001 \\
    $v$  & 12 & 12 & 12 & 11 \\
    $m_i'$ & 32 & 64 & 256 & 64  \\
    $m_i$ & 256 & 256 & 256 & 256 \\
    $\alpha$ & - & $10^3$ & $10^4$ & $10^4$ \\
    tasks per batch & 1 & 1 & 1 & 1\\
    meta-training iteration & 10000 & 10000 & 10000 & 10000 \\
  \end{tabular}
  \caption{The final hyper-parameters in the NLI tasks.}
  \label{tab:nli_hyper}
 \end{table}
 
In Table \ref{tab:nli_detail} we report the detailed classification accuracy on the 12 NLI tasks with their standard errors. 
\begin{table}[!h]
\centering
  \begin{tabular}{c | c | c | c c c c}
    Task name & $N$ & $k$ & MAML & BMAML & PACOH & PACMAML\\\hline
     &  & 4 & 63.0$\pm$1.4 & 61$\pm$2.3 &62.1$\pm$2.2 & 68.8$\pm$1.6\\
     CoNLL & 4 & 8 & 74.1$\pm$1.8 & 68$\pm$1.9 &74.9$\pm$1.2 & 79.5$\pm$1.1\\
     &  & 16 & 81.6$\pm$0.6 & 77.9$\pm$1.4 &83$\pm$0.7  & 84.5$\pm$0.6\\ \hline
     &  & 4 & 51.3$\pm$1.8 &47.5$\pm$1.9 &55.9$\pm$1.6  & 60.6$\pm$1\\
    MITR & 8 & 8 & 69.1$\pm$2.1 &64.2$\pm$1.3 &71.8$\pm$0.8 & 70.9$\pm$1\\
     &  & 16 & 78.7$\pm$1.1 &72.2$\pm$1.3 &78.1$\pm$0.6 & 80$\pm$0.6\\ \hline  
     &  & 4 & 60.1$\pm$2.0 &53$\pm$2.7 &60.1$\pm$3.1  & 60.5$\pm$1.9\\
    Airline & 3 & 8 & 64.7$\pm$2.7 &67.4$\pm$2.2 &65$\pm$1.5 & 65.4$\pm$1.7 \\
     &  & 16 & 68.4$\pm$2.2 &66.7$\pm$2.6 &69.6$\pm$1.3 & 69.9$\pm$1.1\\ \hline  
     &  & 4 & 56.3$\pm$0.5 &58.7$\pm$3.1 &58.7$\pm$2.6 & 63.3$\pm$1.3\\
    Disaster & 2 & 8 & 61.5$\pm$0.7 &64.1$\pm$2.3 &64.1$\pm$2.4  & 63.9$\pm$2.9\\
     &  & 16 & 67.7$\pm$0.4 &69.4$\pm$2.0 &71.3$\pm$1.7 & 71.1$\pm$1.6\\ \hline   
     &  & 4 & 13.7$\pm$2.1 &13.9$\pm$0.5 &13.8$\pm$0.5  & 13.7$\pm$0.7\\
    Emotion & 13 & 8 & 15.8$\pm$1.9 &14.6$\pm$1.1 &15$\pm$0.6 & 15.8$\pm$0.6\\
     &  & 16 & 16.7$\pm$0.9 &15.6$\pm$0.7 & 17.2$\pm$0.7 & 16.8$\pm$0.5\\ \hline  
     & & 4 & 58$\pm$2.1 &58$\pm$2.0 &58.8$\pm$2.6  & 59.9$\pm$2.1\\
    Political Bias & 2 & 8 & 60.7$\pm$1.9 &61$\pm$1.9 &62.1$\pm$1.5 & 62$\pm$1.9\\
     & & 16 & 64.6$\pm$0.9 &63.5$\pm$1.2 &63.8$\pm$1.2 & 66$\pm$1\\ \hline  
     & & 4 & 52.2$\pm$0.9 &54.9$\pm$0.7 &53.1$\pm$0.9  & 53.4$\pm$1.3\\
    Political Audience & 2 & 8 & 56.1$\pm$1.5 &55.9$\pm$1.1 &56$\pm$1.3 & 56$\pm$1.2\\
     & & 16 & 56.5$\pm$1.2 &56.9$\pm$1.3 &60$\pm$0.9 & 59.6$\pm$1\\ \hline  
    & & 4 & 18.9$\pm$0.8 &17.4$\pm$0.6 &19.2$\pm$0.7 & 19.3$\pm$0.6\\
    Political Message & 9 & 8 & 22.3$\pm$0.7 &19.3$\pm$0.8 &22.3$\pm$0.6 & 22.6$\pm$0.5\\
     & & 16 & 24.3$\pm$0.8 &21.6$\pm$0.4 &24.9$\pm$0.4 & 25.5$\pm$0.8\\ \hline  
    & & 4 & 58.7$\pm$2.1 &56.2$\pm$2.8 &59$\pm$2.3  & 56.8$\pm$3\\
    Rating Books & 3 & 8 & 61.3$\pm$2.7 &55.1$\pm$2.7 &64.2$\pm$2 & 61.6$\pm$1.5\\
    & & 16 & 62$\pm$1.3 &66.6$\pm$2.1 &63$\pm$2.1 & 60.4$\pm$2.7\\ \hline  
    & & 4 & 49.5$\pm$3.0 &53.7$\pm$2.7 &53.7$\pm$2.1  & 52.4$\pm$1.5\\
    Rating DVD & 3 & 8 & 53.2$\pm$1.6 &51.8$\pm$2.4 & 54.7$\pm$2 & 56$\pm$2\\
     & & 16 & 54.7$\pm$1.2 &57.2$\pm$1.5 & 55.4$\pm$1.3  & 60$\pm$1.4\\ \hline  
    & & 4 & 46.9$\pm$3.1 &44.6$\pm$1.9 & 53.3$\pm$1.7  & 52.4$\pm$2\\
    Rating Electronics & 3 & 8 & 52.5$\pm$1.6 &54.1$\pm$1.6 &55.6$\pm$2 & 56.1$\pm$1.3\\
    & & 16 & 54.7$\pm$1.8 &56.6$\pm$1.8 &57.5$\pm$1.5 & 58.2$\pm$0.7\\ \hline 
     & & 4 & 49.9$\pm$2.4 &48.3$\pm$2.1 &57.9$\pm$1.3 & 57.8$\pm$2 \\
    Rating kitchen & 3 & 8 & 50.9$\pm$2.8 &49.5$\pm$3.1 & 52.3$\pm$2.2 &  58.3$\pm$1.5\\
     & & 16 & 58.7$\pm$1.5 &54.2$\pm$1.8 & 54.8$\pm$1.8  & 58.1$\pm$2.5\\ \hline  
    & & 4 & 48.21  &47.27 & 50.47 &51.58\\
    Overall average & - & 8 & 53.52 &52.08 &54.83 & 55.68\\
    & & 16 & 57.38 &56.53 &58.22 & 59.18 \\ \hline
\end{tabular}
  \caption{Classification accuracy and standard error on the 12 NLI tasks.}
  \label{tab:nli_detail}
 \end{table}

\end{document}